\documentclass{article}

\usepackage[english]{babel}

\usepackage[letterpaper,top=2cm,bottom=2cm,left=3cm,right=3cm,marginparwidth=1.75cm]{geometry}

\usepackage{natbib}
\usepackage{amssymb}
\usepackage{amsmath,bm}
\usepackage{amsthm}
\usepackage{thmtools}
\usepackage{mathtools}
\usepackage{graphicx}
\usepackage[colorlinks=true, allcolors=blue]{hyperref}
\usepackage{unembedfig}

\usepackage{xspace}
\usepackage{thm-restate}

\theoremstyle{plain}
\newtheorem{theorem}{Theorem}[section]
\newtheorem{proposition}[theorem]{Proposition}

\theoremstyle{definition}
\newtheorem{definition}[theorem]{Definition}

\theoremstyle{remark}

\newcommand{\RR}{\mathbb{R}}

\newcommand{\vx}{\mathbf{x}}

\newcommand{\vf}{\mathbf{f}}
\newcommand{\vg}{\mathbf{g}}
\newcommand{\vl}{\mathbf{l}}
\newcommand{\vu}{\mathbf{u}}
\newcommand{\vv}{\mathbf{v}}
\newcommand{\vw}{\mathbf{w}}
\newcommand{\va}{\mathbf{a}}
\newcommand{\vb}{\mathbf{b}}
\newcommand{\vz}{\mathbf{z}}

\newcommand{\ie}{\textit{i.e.}\@\xspace}
\newcommand{\eg}{\textit{e.g.}\@\xspace}

\usepackage[capitalize]{cleveref}
\Crefname{equation}{Eq.}{Eqs.}
\Crefname{figure}{Fig.}{Figs.}
\Crefname{table}{Tab.}{Tabs.}
\Crefname{lemma}{Lemma}{Lemmas}
\Crefname{definition}{Def.}{Defs.}
\Crefname{appendix}{App.}{Apps.}
\Crefname{assumption}{Asm.}{Asms.}
\Crefname{section}{\S}{\S\S}
\crefformat{section}{\S#2#1#3} 
\Crefformat{section}{\S#2#1#3} 

\newcommand{\R}{\mathbb{R}}

\newcommand{\T}{^\top}

\newcommand{\snorm}[1]{\left\lVert#1\right\rVert_2}

\newcommand{\set}[1]{\left\{#1\right\}}
\newcommand{\cardinality}[1]{\left|#1\right|}

\newcommand{\sigmoid}[1]{\operatorname{sigmoid}\left(#1\right)}
\newcommand{\softmax}[1]{\operatorname{softmax}\left(#1\right)}

\newcommand{\sign}[1]{\operatorname{sign}{\left(#1\right)}}

\newcommand{\inputs}{\mathbf{x}}
\newcommand{\eye}[1]{\mathbf{I}_{#1}}
\newcommand{\fembed}{\vf(\vx)}
\newcommand{\funembed}{\mathbf{g}\left(y\right)}
\newcommand{\unembed}{\mathbf{W}}

\newcommand{\unembedb}{\mathbf{W'}}
\newcommand{\braid}{\mathbf{B}}

\newcommand{\unembedrow}[1]{\mathbf{w}_{#1}}
\newcommand{\unitrow}[1]{\mathbf{e}_{#1}\T}
\newcommand{\cossim}[2]{\operatorname{cos}\!\left(#1, #2\right)}
\newcommand{\cosmat}[1]{\mathbf{C}\!\left(#1\right)}
\newcommand{\signpattern}[1]{\mathcal{S}\left(#1\right)}
\newcommand{\rankingpattern}[1]{\mathcal{R}\left(#1\right)}

\newcommand{\skipgram}[2]{\operatorname{Skip-gram}\left(#1, #2\right)}

\newcommand{\colspan}[1]{\operatorname{colspan}\left(#1\right)}

\newcommand{\rank}[1]{\operatorname{rank}\left(#1\right)}

\newcommand{\expectation}[2][]{\mathbb{E}_{#1}\!\left[#2\right]}

\newcommand{\colcenter}[1]{\widetilde{#1}}

\newcommand{\zeroes}{\mathbf{0}}
\newcommand{\ones}{\mathbf{1}}

\title{What Cosine Similarity of Label Representations\\ Can and Cannot Tell us}
\author{Beatrix M. G. Nielsen \\ IT University of Copenhagen \\ \texttt{beat@itu.dk} 
\and Andreas Grivas \\ School of Mathematics, University of Edinburgh \\ \texttt{agrivas@ed.ac.uk}}
\date{}

\begin{document}
\maketitle

\begin{abstract}
Cosine similarity is often used to measure the similarity of vector representations of neural network models. However,  the cosine similarity of representations is not guaranteed to tell us anything about model probabilities. In this paper we show that for a softmax classifier, be it an image classifier or an autoregressive language model, the cosine similarity between label representations (called unembeddings in the paper) does not give any information on the probabilities assigned by the model. Specifically, we prove that given two unembeddings, it is possible to create another model which assigns the same probabilities for all inputs, but where the cosine similarity between the representations is now either $1$ or $-1$. We also show that for a sigmoid classifier (where each input can be assigned multiple labels), all pairwise cosine similarities between the unembeddings define the set of possible label combinations. However, for softmax classifiers (where each input is assigned a ranking of the labels from most to least likely), we need all pairwise cosine similarities between all \textit{differences} of unembeddings to know which rankings the model can predict. We conclude that it is misleading to interpret the cosine similarity between unembeddings without reference to the classifier that produced them.
\end{abstract}

\section{Introduction}

Suppose we have a language model (of the model class we introduce below \cref{def:softmax_classifier}) and we consider the vector representations for the two tokens ``dog'' and ``cat''. If we measure the cosine similarity between the two to be $0.95$, it is tempting to say that this is because ``dog'' and ``cat'' are tokens which co-occur in many similar contexts (reminiscent of Latent Semantic Analysis~\citep{deerwester1990} and Word2Vec~\citep{mikolov2013}), so the model will assign similar probabilities to these two tokens.
However, this reasoning is flawed for softmax classifiers:
it is possible to make a second model which matches the token probabilities of the original model for all inputs, but where the cosine similarity between ``dog'' and ``cat'' is now $-1$.
It is also possible to make a model which assigns very different probabilities to the ``dog'' and ``cat'' tokens, but where the cosine similarity of their representations is $1$.
This means that the cosine similarity between representations of tokens is disconnected from assigned probabilities and therefore should not be used for explaining those probabilities. 

While some studies that use cosine similarity on token representations are aware of its limitations, \eg ~\citet[Appendix A]{land2024}, and \citet{steck2024cosine} shows that cosine similarity can be arbitrary for certain linear models, there is to our knowledge no reference in the literature that highlights the problem for general softmax classifiers.
We fill this gap and further highlight that it is misleading to interpret the cosine similarity between unembeddings without reference to the classifier that produced them.
To show this, we characterise when the decision regions of softmax and sigmoid classifiers are determined by a matrix of cosine similarities between their unembeddings. 
%
In particular, if the unembeddings arise from a sigmoid classifier, all pairwise cosine similarities of the unembeddings determine its decision regions.
In contrast, for the unembeddings of a softmax classifier, we need all pairwise cosine similarities between all \textit{differences} of unembeddings to determine its fine-grained decision regions; pairwise cosine similarities of unembeddings are not enough.  

Our contributions are as follows:
\begin{itemize}
    \item For a softmax classifier (\cref{def:softmax_classifier}), we show that since probabilities are invariant to vector translations (\cref{lemma:Adding_vector_does_not_change_probs}) but cosine similarity is not (\cref{lemma:adding_vector_cosine_1,lemma:adding_vector_cosine_m1}), these two properties of the model are disconnected (\cref{theorem:equiv_models_diff_cosine}). Illustration in \cref{fig:three_models_same_probabilities_diff_cosines}.
    \item For a sigmoid classifier (\cref{eq:sigmoid_model}), we show that cosine similarities of unembeddings \textbf{can} tell us the possible label combinations (\cref{lemma:cos-sigmoid}).
    \item For a softmax classifier, we show that cosine similarities of unembeddings \textbf{cannot} tell us the possible rankings (\cref{lemma:counterexample}), for this we need cosine similarities between \textit{differences} of unembeddings (\cref{lemma:cos-softmax}).
\end{itemize}

\begin{figure*}[t!]    
    \centering
    \begin{minipage}{0.25\textwidth}
        \centering
        \includegraphics[width = \textwidth]{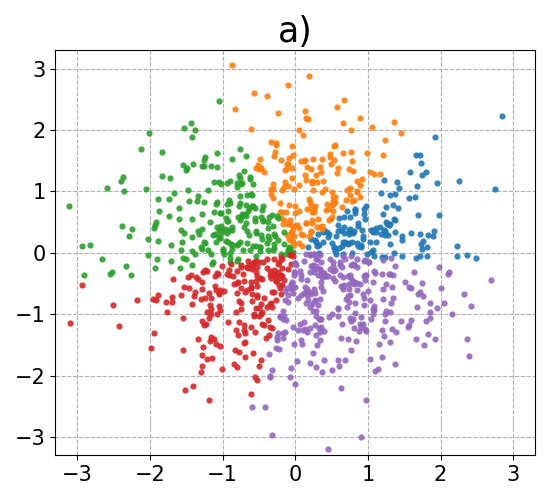}
    \end{minipage}%
    \begin{minipage}{0.25\textwidth}
        \centering
        \includegraphics[width = \textwidth]{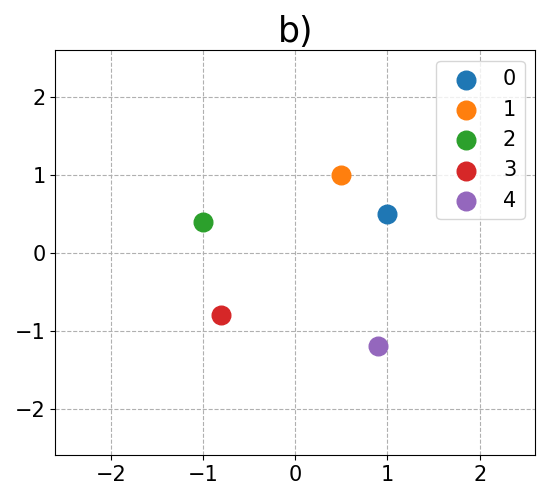}
    \end{minipage}%
    \begin{minipage}{0.25\textwidth}
        \centering
        \includegraphics[width = \textwidth]{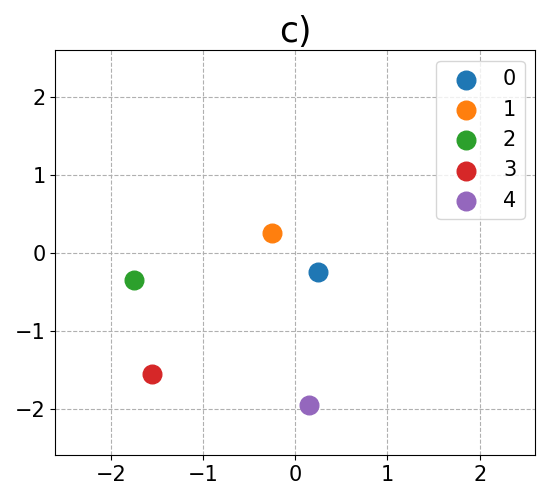}
    \end{minipage}%
    \begin{minipage}{0.25\textwidth}
        \centering
        \includegraphics[width = \textwidth]{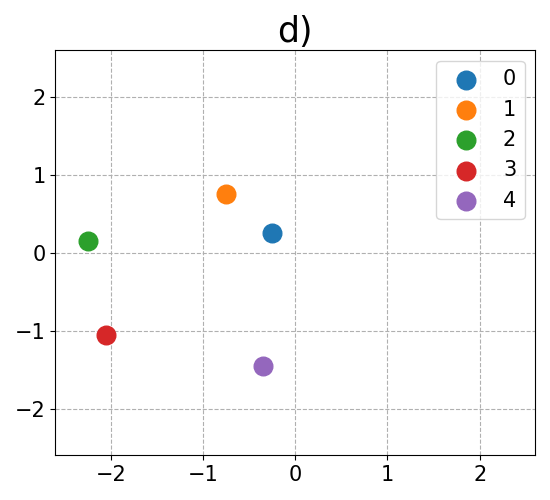}
    \end{minipage}%
    \caption{Example of three models which give same probabilities to all labels for all inputs, but where cosine similarities differ between the unembeddings, see the Appendix \cref{app:examples} for details. \textbf{a)} Embeddings coloured by highest probability label. These are the same for all three models. \textbf{b)} Unembeddings for model 1. Cosine between labels $0$ and $1$ is about $0.8$. \textbf{c)} Unembeddings for model 2. Cosine between labels $0$ and $1$ is $-1$. \textbf{d)} Unembeddings for model 3. Cosine between labels $0$ and $1$ is $1$. }
    \label{fig:three_models_same_probabilities_diff_cosines}
\end{figure*}

\section{The Softmax Classifier}
Let $\vx \in \mathcal{X}$ be inputs and $\mathcal{Y}$ a set of $k$ labels. We define a softmax classifier as a model using functions $\vf:\mathcal{X}\to \RR^d$, $\vg:\mathcal{Y} \to \RR^d$ for $d>0$, assigns probabilities, $p$, to mutually exclusive labels, $y\in \mathcal{Y}$, given inputs, $\vx \in \mathcal{X}$, as:
\begin{align}
    \label{def:softmax_classifier}
    p(y\mid \vx) = \frac{\exp(\vf(\vx)^{\top}\vg(y))}{\sum_{y'\in \mathcal{Y}}\exp(\vf(\vx)^{\top}\vg(y'))} \; .    
\end{align}
So for each input the model makes a categorical distribution over the labels. The $\vf(\vx)$ are the representations of the inputs, and we call these the embeddings. The $\vg(y)$ are the representations of the labels and we call these the unembeddings. This paper considers the properties of the unembeddings only. This is not because properties of the embeddings are not interesting, but because we cannot talk about everything at once. The model class in \cref{def:softmax_classifier} is very broad including any image classifier using a softmax on the last logits, whether the encoder is a single Convolutional Neural Network \citep{lecun1989backpropagation}, a ResNet \citep{he2016deep} or a vision transformer \citep{dosovitskiy2020image}. The model class also includes large language models from early incarnations like BERT \citep{devlin2019bert}\footnote{See \citet[Appendix D.3]{roeder2021linear}.} and GPT-2~\citep{radford2019language} to more recent models like GPT-3~\citep{brown2020language}\footnote{See \citet[equation 2]{radford2018improving}} and gpt-oss-120B~\citep[see released code]{openai2025gptoss}. 

\section{Cosine of Unembeddings vs Probabilities}
\label{sec:cosine-vs-probs}

The first result of this paper relies on two properties. First, for a softmax classifier, adding some vector $\vv$ to all unembeddings, does not change the probability distributions of the model (this is not new, but we state it in \cref{lemma:Adding_vector_does_not_change_probs} and provide a full proof in \cref{app:adding_vector_does_not_change_probs} for completeness).

\begin{restatable}{lemma}{addingvectorsameprobs}
    \label{lemma:Adding_vector_does_not_change_probs}
    Let $\vv\in \RR^d$. For a softmax classifier as in \cref{def:softmax_classifier}, adding $\vv$ to all unembeddings, does not change the probability $p(y\mid \vx)$. 
\end{restatable}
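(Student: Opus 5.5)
The plan is a direct computation from \cref{def:softmax_classifier}. Define the translated unembedding map $\vg'(y) = \vg(y) + \vv$ for every $y \in \mathcal{Y}$, keep the embeddings $\vf$ unchanged, and write $p'(y\mid\vx)$ for the probabilities of the resulting model. The goal is to show $p'(y\mid\vx) = p(y\mid\vx)$ for every $\vx\in\mathcal{X}$ and every $y\in\mathcal{Y}$.

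First, by bilinearity of the inner product, each logit of the new model is
\begin{align*}
\vf(\vx)\T\vg'(y) = \vf(\vx)\T\vg(y) + \vf(\vx)\T\vv .
\end{align*}
The key observation is that the extra term $c(\vx) := \vf(\vx)\T\vv$ depends on $\vx$ but not on the label $y$. Next, using $\exp(a+b)=\exp(a)\exp(b)$, both the numerator and every summand of the denominator of $p'(y\mid\vx)$ carry the common factor $\exp(c(\vx))$. This factor can be pulled out of the finite sum over $y'\in\mathcal{Y}$, and since it is strictly positive it cancels. What remains is exactly the expression for $p(y\mid\vx)$.

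None of the steps is a real obstacle. The only point that needs care is to state explicitly that $c(\vx)$ does not depend on $y$. This is what lets the same factor appear in every term of the sum over $y'$, and it reflects the fact that softmax is invariant to adding a constant to all logits. Since $\vx$ and $y$ were arbitrary, the lemma follows.
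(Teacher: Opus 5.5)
Your proposal is correct and follows the paper's proof essentially step for step. The paper also writes the new logit as $\vf(\vx)^{\top}\vg(y)+\vf(\vx)^{\top}\vv$, factors $\exp(\vf(\vx)^{\top}\vv)$ out of the numerator and every summand of the denominator, and cancels it; your explicit remark that this factor is independent of $y$ and strictly positive is what justifies that cancellation, which the paper leaves implicit.
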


Second, adding such a vector to the unembeddings can greatly change the cosine similarities. In the next two lemmas we see that for two vectors, we can always find a translation such that the translated vectors have cosine similarity $-1$ (\cref{lemma:adding_vector_cosine_m1}) or $1$ (\cref{lemma:adding_vector_cosine_1}). Proofs in \cref{app:adding_vector_change_cosine}.

\begin{restatable}{lemma}{cosineminusone}
    \label{lemma:adding_vector_cosine_m1}
    Let $\va, \vb \in \RR^d$ be vectors with $\va \neq \vb$. We can always choose $\vv \in \RR^d$ such that $\cos(\va +\vv,\vb + \vv) = -1$. 
\end{restatable}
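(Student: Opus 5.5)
We take $\vv = -\tfrac{1}{2}(\va+\vb)$, which translates the midpoint of $\va$ and $\vb$ to the origin. Then
\begin{align*}
\va + \vv = \tfrac{1}{2}(\va - \vb), \qquad \vb + \vv = \tfrac{1}{2}(\vb - \va) = -(\va+\vv).
\end{align*}
So the two translated vectors are negatives of each other.

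Because $\va \neq \vb$, the vector $\vu := \tfrac12(\va-\vb)$ is nonzero. Hence both translated vectors are nonzero and the cosine similarity is well defined; this nonzero check is the only point needing care, and it is exactly where the hypothesis $\va\neq\vb$ is used. The computation is then
\begin{align*}
\cos(\va+\vv,\vb+\vv) = \frac{\vu\T(-\vu)}{\snorm{\vu}\,\snorm{-\vu}} = \frac{-\snorm{\vu}^2}{\snorm{\vu}^2} = -1.
\end{align*}

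More generally, any $\vv = -\va - t(\va-\vb)$ with $t\in(-1,0)$ also works. With this choice $\va+\vv = -t(\va-\vb)$ and $\vb+\vv = -(1+t)(\va-\vb)$ are nonzero multiples of $\va-\vb$ with opposite signs. The midpoint choice $t=-\tfrac12$ is the simplest, and it is the one used in the proof.
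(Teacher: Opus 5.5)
Your proof is correct and is essentially the paper's: the paper chooses $\vv = -(\va + \tfrac{1}{2}(\vb-\va))$, which is exactly your midpoint choice $-\tfrac{1}{2}(\va+\vb)$, and concludes from $(\va+\vv)+(\vb+\vv)=\mathbf{0}$ that the cosine is $-1$. You also check explicitly that $\tfrac12(\va-\vb)\neq\mathbf{0}$, so the cosine is defined; the paper leaves this use of $\va\neq\vb$ implicit, so your version is slightly more careful.
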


\begin{restatable}{lemma}{cosineone}
    \label{lemma:adding_vector_cosine_1}
    Let $\va, \vb \in \RR^d$ be two vectors. We can always choose $\vv \in \RR^d$ such that $\cos(\va +\vv,\vb + \vv) = 1$. 
\end{restatable}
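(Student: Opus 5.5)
Our plan is to pick $\vv$ so that both translated vectors become positive multiples of one common nonzero direction. The cosine of two nonzero vectors that are positive multiples of each other is $1$. The only care needed is to keep both translated vectors nonzero, since otherwise the cosine is undefined.

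\textbf{Case $\va \neq \vb$.} Take $\vv = -\va + t(\vb - \va)$ for a scalar $t > 0$. Then
\begin{align*}
\va + \vv = t(\vb - \va), \qquad \vb + \vv = (1+t)(\vb - \va).
\end{align*}
Both are positive multiples of the nonzero vector $\vb - \va$. A short computation gives
\begin{align*}
\cos(\va+\vv,\vb+\vv) = \frac{t(1+t)\snorm{\vb-\va}^2}{t(1+t)\snorm{\vb-\va}^2} = 1 .
\end{align*}
Any fixed choice such as $t = 1$ works. Equivalently, $\vv = \vb - 2\va$ gives $\va+\vv = \vb-\va$ and $\vb+\vv = 2(\vb-\va)$.

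This is the mirror image of the construction one would use for \cref{lemma:adding_vector_cosine_m1}. There one places the origin strictly between $\va$ and $\vb$ on the line through them. Here one places it on that line but outside the segment $[\va,\vb]$.

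\textbf{Case $\va = \vb$.} The translated vectors coincide, so any $\vv$ with $\va + \vv \neq \zeroes$ works, for example $\vv = \mathbf{e}_1 - \va$. Then
\begin{align*}
\cos(\va+\vv,\va+\vv) = 1 .
\end{align*}
This uses $d > 0$, so that a nonzero vector $\mathbf{e}_1$ exists.

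The main obstacle is only the well-definedness of the cosine. The choice $t > 0$ in the first case, and the explicit nonzero choice in the second, are what rule out a zero vector.
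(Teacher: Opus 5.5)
Your proof is correct and takes essentially the paper's approach: the paper makes the same construction $\vv = -\va + t(\vb-\va)$ with $t = \tfrac12$, which gives $\vb + \vv = 3(\va + \vv)$. Your handling of $\va = \vb$ is slightly more careful than the paper's, which simply takes $\vv = \zeroes$ and overlooks the case $\va = \vb = \zeroes$, where the cosine is undefined.
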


\subsection{Equivalent Models can have very Different Cosine Similarities Between Unembeddings}
\begin{theorem}
    \label{theorem:equiv_models_diff_cosine}
    Let $(\vf, \vg)$ be a softmax classifier as in \cref{def:softmax_classifier}. For two labels $y_i, y_j \in \mathcal{Y}$, assume the unembeddings of the labels are not equal, $\vg(y_i) \neq \vg(y_j)$. Let $\cos(\vg(y_i), \vg(y_j)) = c$ for some $c\in [-1, 1]$. We can then construct two equivalent models (all assigned probabilities will be the same for all inputs), $(\vf', \vg')$ and $(\vf'', \vg'')$, where for $(\vf', \vg')$ we have $\cos(\vg'(y_i), \vg'(y_j)) = -1$ and for $(\vf'', \vg'')$, we have $\cos(\vg''(y_i), \vg''(y_j)) = 1$. 
\end{theorem}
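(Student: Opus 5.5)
The plan is to obtain both equivalent models by a single translation of the unembeddings, keeping the embeddings fixed. That is, we take $\vf' = \vf'' = \vf$ and set
\begin{align*}
\vg'(y) = \vg(y) + \vv', \qquad \vg''(y) = \vg(y) + \vv'' \quad \text{for all } y \in \mathcal{Y},
\end{align*}
for suitable vectors $\vv', \vv'' \in \RR^d$. The same translation is applied to every label. By \cref{lemma:Adding_vector_does_not_change_probs}, both $(\vf, \vg')$ and $(\vf, \vg'')$ then assign exactly the same probabilities $p(y \mid \vx)$ as $(\vf, \vg)$ for all inputs and all labels. So both new models are equivalent to the original, and hence to each other.

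It remains to choose the translations. We set $\va = \vg(y_i)$ and $\vb = \vg(y_j)$; by hypothesis $\va \neq \vb$.
\begin{itemize}
    \item For the first model, \cref{lemma:adding_vector_cosine_m1} applies and gives $\vv'$ with $\cos(\va + \vv', \vb + \vv') = -1$. That is, $\cos(\vg'(y_i), \vg'(y_j)) = -1$.
    \item For the second model, \cref{lemma:adding_vector_cosine_1} gives $\vv''$ with $\cos(\va + \vv'', \vb + \vv'') = 1$. That is, $\cos(\vg''(y_i), \vg''(y_j)) = 1$.
\end{itemize}
The original value $c$ plays no role, which is exactly the point of the theorem.

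The only genuine care needed is well-definedness of the cosines. The new vectors $\vg'(y_i)$, $\vg'(y_j)$ (and likewise for $\vg''$) must be nonzero. This should already be guaranteed by the two lemmas, since they assert the cosine takes a definite value. If one wants to be explicit, concrete choices work:
\begin{itemize}
    \item $\vv' = -(\va + \vb)/2$, which gives $\pm(\va - \vb)/2$. These are nonzero because $\va \neq \vb$, and antiparallel.
    \item $\vv'' = -\va + (\vb - \va)$ when $\va \neq \vb$, which gives $\vb - \va$ and $2(\vb - \va)$, parallel and nonzero.
\end{itemize}
The hypothesis $\va \neq \vb$ is used only in the $-1$ case.

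I expect no real obstacle. The main step is simply noting that a global translation of the unembeddings is a symmetry of the softmax, so all the work is in the earlier lemmas.
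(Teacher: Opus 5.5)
Your proposal is correct and matches the paper's proof: translate every unembedding by one common vector, which leaves probabilities unchanged by \cref{lemma:Adding_vector_does_not_change_probs}, and choose that vector via \cref{lemma:adding_vector_cosine_m1} or \cref{lemma:adding_vector_cosine_1}. Your explicit translations are also valid; $\vv' = -(\va+\vb)/2$ is exactly the paper's choice, and your $\vv''$ differs only in the scaling constant (the paper obtains $\tfrac12(\vb-\va)$ and $\tfrac32(\vb-\va)$).
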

\begin{proof}
    By \cref{lemma:Adding_vector_does_not_change_probs}, we can make an equivalent model by adding the same vector, $\vv$ to all unembeddings. 
    For constructing the model $(\vf', \vg')$, we choose $\vv$ as in \cref{lemma:adding_vector_cosine_m1} and for $(\vf'', \vg'')$ we choose $\vv$ as in \cref{lemma:adding_vector_cosine_1}. 
\end{proof}

\cref{theorem:equiv_models_diff_cosine} says that the cosine similarity between two unembeddings does not tell us anything about the probabilities which the model assigns to the corresponding labels. For an example of three models which for all inputs assign the same probabilities, but where cosine similarity between their unembeddings vary greatly see \cref{fig:three_models_same_probabilities_diff_cosines} and details in the Appendix, \cref{app:examples}. Discussion of other measures in Appendix \cref{app:other_distance_measures}. \cref{lemma:adding_vector_cosine_m1,lemma:adding_vector_cosine_1} rely on translations of the unembeddings, therefore one might hope that centering the unembeddings (\ie making the mean across each feature dimension zero) could help. However, as we show in~\cref{app:example_centered,app:example_centered_length_1} in the Appendix, this still does not guarantee that high cosine similarity leads to high probabilities for the same inputs and fixing the length of the unembeddings is also not enough to get a guarantee.

\section{The Sigmoid Classifier}
\label{sec:sigmoid}
Let $\vx \in \mathcal{X}$ be inputs and $\mathcal{V}$ a set of $k$ labels.
In multi-label classification, our goal is to classify an input $\vx$ by assigning it a subset of non-mutually exclusive labels, $\vv \subseteq \mathcal{V}$.
We model such label assignments via indicator variables of the form $\mathbf{y} = (y_1, \ldots, y_k) \in \mathcal{Y}$ where $\mathcal{Y} = \set{+1, -1}^k$, and $y_v = +1$ denotes that label $v$ was assigned to $\vx$ while $y_v = -1$ denotes that it was not.
We define a sigmoid classifier which using functions $\vf:\mathcal{X}\to \RR^d$, $\vg:\mathcal{V} \to \RR^d$ for some $d>0$, assigns probabilities to each label $v\in \mathcal{V}$ given inputs, $\vx \in \mathcal{X}$ as:
\begin{align}
\label{eq:sigmoid_model}
p(y_v \mid \inputs) &= \sigmoid{\vg(v)\T \fembed}, \\
\sigmoid{z} &= \frac{1}{1 + e^{-z}}.
\end{align}
So for each input the model makes a Bernoulli distribution for each label.
The well known Word2Vec~\citep{mikolov2013} model using skip-gram with negative sampling can be interpreted as a sigmoid classifier (see \cref{app:Word2Vec_is_sigmoid} for details), where the goal is to classify which tokens are likely to co-occur together in a token window. Therefore for this model, the intuition that high cosine similarity connects to co-occurrence makes more sense.


\section{Cosine of Unembeddings vs Decision Regions}

\subsection{Cosine Similarities of Unembeddings Determine Sigmoid Decision Regions}

We let $\unembed \in \R^{k \times d}$ be the unembedding matrix with $i$'th row $\vg(y_i)\T$, and use $\vw_i$ as a shorthand. The decision region for a label combination $(y_1, y_2, \ldots , y_k) \in \set{+1, -1}^k$ is given by $\vf(\vx) \in \R^d$:
\begin{equation}
y_i\unembedrow{i}\T\vf(\vx) > 0,\quad \forall i: 1\leq i \leq k,
\end{equation}
see \cref{app:sigmoid_decision_regions_details} for details. Importantly, when $d<k$, there are label combinations that cannot be predicted~\citep{cover1965,grivas2024}.
 
\begin{definition}[Sign Pattern]
We determine the set of feasible label combinations as the sign pattern of $\unembed \in \R^{k \times d}$:
\begin{align}
\label{eq:sign_pattern_unembeds}
\signpattern{\unembed}= \set{\sign{\unembed \vz} : \vz \in \R^d,\, \unembedrow{i}\T\vz \neq 0, \, 1 \leq i \leq k},
\end{align}
where we disallow zeroes because we want to focus on the predicted label combinations and not parts of the decision boundaries\footnote{Allowing logits to be zero would complicate our set notation: our set would also count decision boundaries, when we are only interested in the set of outputs that can be predicted. Moreover, logits that are exactly zero are unlikely to occur, so ignoring the zero case is not an omission that matters in practice.}. The $\sign{}$ function is applied element-wise.
\end{definition}
Usually the $\vz$ in \cref{eq:sign_pattern_unembeds} would be our embeddings, $\vf(\vx)$. However, if $\vf$ is not surjective, we would get further restrictions which we will not go into here. When $d\geq k$ and $\rank\unembed \geq k$ all label combinations can be predicted and the sign pattern is not useful for telling such unembeddings apart. However, when $d < k$, there must exist label combinations that cannot be predicted and therefore the sign pattern can be used to distinguish between models. To focus on this more interesting case, imagine in the following that $d<k$ and $\rank\unembed \leq d$.
\textbf{Can we determine which label combinations are feasible from the cosine similarities of the unembeddings?}
We show that we can: we introduce the cosine similarity matrix $\mathbf{C}$ which we will show determines the sign pattern. 
 
\begin{definition}[Cosine Similarity Matrix]
Let $\mathbf{C} = \cosmat{\unembed} \in \R^{k \times k}$ be the symmetric matrix with entries $C_{i, j} = \cossim{\unembedrow{i}}{\unembedrow{j}}, \, i, j \in \set{1, \ldots, k}$,
where:
\begin{equation}
    \cossim{\unembedrow{i}}{\unembedrow{j}}  = \frac{\unembedrow{i} \T \unembedrow{j}}{\snorm{\unembedrow{i}} \snorm{\unembedrow{j}}}.
\end{equation}
\end{definition}

\begin{restatable}{lemma}{cossigmoid}
    \label{lemma:cos-sigmoid}
Let $\unembed \in \R^{k \times d}$ with non-zero rows. Then the cosine similarity matrix $\cosmat{\unembed}$ of the unembedding rows determines $\signpattern{\unembed}$.
\end{restatable}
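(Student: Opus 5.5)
Let $\mathbf{D} = \operatorname{diag}(\snorm{\unembedrow{1}}, \ldots, \snorm{\unembedrow{k}})$, which is invertible because the rows of $\unembed$ are non-zero, and let $\mathbf{N} = \mathbf{D}^{-1}\unembed$ be the row-normalised unembedding matrix. The plan is to show two things. First, $\signpattern{\unembed} = \signpattern{\mathbf{N}}$. Second, $\signpattern{\mathbf{N}}$ depends only on the Gram matrix $\mathbf{N}\mathbf{N}\T$, which is exactly $\cosmat{\unembed}$.

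The first step is immediate. For every $\vz$ we have $\mathbf{N}\vz = \mathbf{D}^{-1}\unembed\vz$. Since $\mathbf{D}^{-1}$ is diagonal with positive entries, each coordinate of $\unembed\vz$ is multiplied by a positive scalar. Its sign is therefore preserved, and it is zero if and only if the corresponding coordinate of $\unembed\vz$ is zero. So the admissible sets of $\vz$ coincide, the resulting sign vectors coincide, and the two sign patterns are equal.

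The second step is the main one. Suppose $\unembed'\in\R^{k\times d'}$ is another matrix with non-zero rows and $\cosmat{\unembed'} = \cosmat{\unembed}$. Its normalised version $\mathbf{N}'$ then satisfies $\mathbf{N}'\mathbf{N}'^{\top} = \mathbf{N}\mathbf{N}\T$. The dimensions $d$ and $d'$ may differ, so I would avoid matching vectors in $\R^d$ with vectors in $\R^{d'}$ and instead describe the sign pattern intrinsically through the column space. The set of logit vectors $\{\mathbf{N}\vz : \vz\in\R^d\}$ equals $\colspan{\mathbf{N}}$, which in turn equals $\colspan{\mathbf{N}\mathbf{N}\T}$, because $\mathbf{N}$ and $\mathbf{N}\mathbf{N}\T$ have the same column space. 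Hence
\begin{equation*}
\signpattern{\unembed} = \set{\sign{\vu} : \vu \in \colspan{\cosmat{\unembed}},\ u_i\neq 0 \ \forall i},
\end{equation*}
and the right-hand side is a function of $\cosmat{\unembed}$ alone. Applying the same argument to $\unembed'$ gives $\signpattern{\unembed'} = \signpattern{\unembed}$, which proves the lemma.

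The only point needing justification is the column-space identity $\colspan{\mathbf{N}} = \colspan{\mathbf{N}\mathbf{N}\T}$. The inclusion $\supseteq$ is trivial. For the reverse, it suffices to show that the two matrices have the same rank, which follows from $\ker(\mathbf{N}\mathbf{N}\T) = \ker(\mathbf{N}\T)$: if $\mathbf{N}\mathbf{N}\T\vu = \zeroes$, then $\snorm{\mathbf{N}\T\vu}^2 = \vu\T\mathbf{N}\mathbf{N}\T\vu = 0$. A more constructive alternative would be to note that two matrices with equal Gram matrices differ by an isometry between their row spaces, $\mathbf{N}' = \mathbf{N}\mathbf{Q}$ with $\mathbf{Q}$ having orthonormal columns on the relevant subspace. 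That route requires padding dimensions when $d\neq d'$, so the column-space argument is cleaner and is the one I would write out.
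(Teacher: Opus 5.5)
Your proof is correct and is essentially the paper's argument. Both rest on three facts: $\cosmat{\unembed} = \mathbf{D}\unembed\unembed\T\mathbf{D}$ with $\mathbf{D}$ the diagonal matrix of inverse row norms, the kernel/rank argument giving $\colspan{\unembed} = \colspan{\unembed\unembed\T}$, and sign invariance under positive row scaling. The only difference is the order: you normalise rows first (your $\mathbf{N}$ is $\mathbf{D}\unembed$ in the paper's notation), so the Gram matrix is already $\cosmat{\unembed}$, the paper's separate column-scaling step $\signpattern{\mathbf{G}} = \signpattern{\mathbf{G}\mathbf{D}}$ is not needed, and you get directly the explicit formula $\signpattern{\unembed} = \set{\sign{\vu} : \vu \in \colspan{\cosmat{\unembed}},\ u_i \neq 0 \ \forall i}$.
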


Proof in \cref{app:lemma_cos-sigmoid}. This means that given the unembeddings and all cosine similarities between them, we know which combinations of labels can be predicted by our model.

\subsection{Cosine Similarities of Differences of Unembeddings Determine Softmax Decision Regions}
\label{sec:softmax_decision_boundaries}
For the softmax case, the decision region for label $t$ is given by feature vectors $\vf(\vx) \in \R^d$ that satisfy:
\begin{equation}
(\unembedrow{t} - \unembedrow{i})\T \vf(\vx) > 0,\quad \forall i: 1 \leq i \leq k,\quad i \neq t
\end{equation}
see \cref{app:softmax_decision_regions_details} for details. We note that in contrast to the sigmoid case, we now take pairwise differences of the rows of $\unembed$. We define the softmax decision regions by introducing the braid matrix~\citep[lecture 1]{stanley2004}, $\braid \in \R^{\binom{k}{2} \times k}$. This is the matrix whose rows are differences of unit vectors $\unitrow{i} - \unitrow{j},\, 1 \leq i < j \leq k$ in lexicographic order, see~\cref{app:softmax} for more details. Note that $\unembed' = \braid \unembed$ contains all pairwise differences of the rows of $\unembed$.
 
\begin{definition}[Ranking Pattern]
We define the set of feasible rankings for a classifier $\unembed$ as the ranking pattern:
\begin{align}
\rankingpattern{\unembed} = \begin{Bmatrix}\sign{\braid\unembed\vz}:\!\! & \vz \in \R^d, & \\ 
& (\unembedrow{i} - \unembedrow{j})\T\vz \neq 0,\!\!\!\!\! \\
& 1 \leq i < j \leq k\end{Bmatrix},
\end{align}
where as in the sign pattern case, we do not consider any points that fall on the decision boundary.
\end{definition}

The ranking pattern is important as it determines which rankings of labels our softmax classifier is able to produce or not.
If $\rank{\colcenter{\unembed}} = k-1$, where $\colcenter{\unembed}$ is the centered $\unembed$ with column means subtracted, then the model will be able to assign any ranking of the labels, \ie $\cardinality{\rankingpattern{\unembed}} = k!$. However, when $\rank{\colcenter{\unembed}} < k-1$, \eg when $d < k -1$, many rankings will be infeasible and therefore missing from the set~\citep{cover1967,grivas2022}.
\textbf{Can we determine which rankings are feasible from the matrix of cosine similarities?}

As we prove via counter-example in~\cref{app:counter-example}, this is not possible: while $\cosmat{\unembed}$ determines $\signpattern{\unembed}$, it does not determine $\rankingpattern{\unembed}$, and therefore neither does $\signpattern{\unembed}$, via~\cref{lemma:cos-sigmoid}. On the other hand, the cosine similarities of all the differences of unembeddings do determine the ranking pattern, see the proof in \cref{app:lemma_cos_softmax}.

\begin{restatable}{theorem}{counterexample}
\label{lemma:counterexample}
$\cosmat{\unembed}$ (and by extension, $\signpattern{\unembed}$) does not determine $\rankingpattern{\unembed}$.
\end{restatable}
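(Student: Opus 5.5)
The plan is to exhibit an explicit pair of unembedding matrices $\unembed_1, \unembed_2 \in \R^{k \times d}$ with non-zero rows such that $\cosmat{\unembed_1} = \cosmat{\unembed_2}$ but $\rankingpattern{\unembed_1} \neq \rankingpattern{\unembed_2}$. The key observation is that the cosine similarity matrix is invariant under rescaling each row of $\unembed$ by its own positive constant, whereas the differences $\unembedrow{i} - \unembedrow{j}$, and hence $\braid\unembed$, are not. So row rescaling should give a counterexample. The parenthetical claim about $\signpattern{\unembed}$ then follows from \cref{lemma:cos-sigmoid}: equal cosine matrices force equal sign patterns. One can also see directly that positive row scalings do not change $\sign{\unembed\vz}$.

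I will take the smallest possible case, $k=3$ and $d=1$. Let $\unembed_1 = (1,2,3)\T$ and $\unembed_2 = (1,3,2)\T$. All entries are positive scalars, so every pairwise cosine equals $1$, and $\cosmat{\unembed_1} = \cosmat{\unembed_2}$ is the all-ones matrix. The ranking pattern is computed over $z \in \R\setminus\{0\}$; the differences of distinct entries are non-zero, so no $z\neq 0$ lies on a boundary.

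For $\unembed_1$, taking $z>0$ gives the ranking $3 \succ 2 \succ 1$, and $z<0$ gives $1 \succ 2 \succ 3$. For $\unembed_2$, taking $z>0$ gives $2 \succ 3 \succ 1$, and $z<0$ gives $1 \succ 3 \succ 2$. Written as sign vectors of $\braid\unembed z$, with rows ordered $(1,2),(1,3),(2,3)$:
\begin{itemize}
\item $\rankingpattern{\unembed_1} = \set{(-,-,-),(+,+,+)}$;
\item $\rankingpattern{\unembed_2} = \set{(-,-,+),(+,+,-)}$.
\end{itemize}
These sets are different, which proves the claim. I also note that both sign patterns equal $\set{(+,+,+),(-,-,-)}$, which confirms the extension to $\signpattern{\unembed}$.

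The only potential obstacle is the first choice I tried: $(3,2,1)$ is just a reversal of $(1,2,3)$. It produces the same two rankings with the roles of $z>0$ and $z<0$ swapped, so the two sets coincide. The example therefore needs a scaling that changes the middle element of the order, not merely reverses it, which is why $(1,3,2)$ is used. If a less degenerate example is desired, with $d\geq 2$ and cosines other than $\pm 1$, the same idea works: take generic $\unembed$ with $d=2$, $k=3$, and rescale one row so that the line arrangement of differences rotates past another row. I would mention this only as a remark.
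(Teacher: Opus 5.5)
Your proof is correct and uses the same idea as the paper's. Positive row rescaling leaves $\cosmat{\unembed}$, and hence $\signpattern{\unembed}$, unchanged, but it alters the differences in $\braid\unembed$. The paper places seven unembeddings in $\R^2$ and shrinks one so that its top-1 region vanishes; your $k=3$, $d=1$ example is smaller and can be checked entirely by hand.

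One correction applies only to your closing remark. For $k=3$, $d=2$, a generic $\unembed$ already realizes all $3!$ rankings, and rescaling a row so that one difference line moves past another leaves this unchanged; only the exact scaling that makes the three points collinear changes it. A non-degenerate planar example therefore needs $k\geq 4$, as in the paper.
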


\begin{restatable}{theorem}{cossoftmax}
\label{lemma:cos-softmax}
$\cosmat{\braid\unembed}$ determines $\rankingpattern{\unembed}$.    
\end{restatable}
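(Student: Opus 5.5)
The plan is to reduce \cref{lemma:cos-softmax} to \cref{lemma:cos-sigmoid}. The ranking pattern is, by definition, the sign pattern of the matrix $\unembed' = \braid\unembed \in \R^{\binom{k}{2}\times d}$:
\begin{equation*}
\rankingpattern{\unembed} = \set{\sign{\unembed'\vz} : \vz\in\R^d,\ (\unembed'\vz)_r \neq 0 \text{ for all rows } r} = \signpattern{\braid\unembed}.
\end{equation*}
The first step is to check this identity carefully. The rows of $\braid\unembed$ are exactly $(\unembedrow{i}-\unembedrow{j})\T$ for $i<j$. So the nonvanishing conditions in the definition of $\rankingpattern{\unembed}$ coincide with those in \cref{eq:sign_pattern_unembeds} applied to $\braid\unembed$.

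The second step is to apply \cref{lemma:cos-sigmoid} to $\braid\unembed$. This gives that $\cosmat{\braid\unembed}$ determines $\signpattern{\braid\unembed} = \rankingpattern{\unembed}$. The hypothesis of that lemma needs nonzero rows, that is, $\unembedrow{i}\neq\unembedrow{j}$ for all $i\neq j$. This condition is also needed for $\cosmat{\braid\unembed}$ to be defined at all, so I will state it as the standing assumption.

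The main obstacle is the degenerate case where two unembeddings coincide. Then some row of $\braid\unembed$ is zero, and the condition $(\unembedrow{i}-\unembedrow{j})\T\vz\neq0$ can never hold, so $\rankingpattern{\unembed}=\emptyset$. I would handle this by noting that the cosine matrix is undefined in that case, or, under any convention that flags zero rows, that the pattern is then determined trivially as the empty set.

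Beyond that, I would briefly recall why \cref{lemma:cos-sigmoid} holds for an arbitrary matrix. Normalising each row to unit length does not change signs, and the Gram matrix of the normalised rows is $\mathbf{C}$. Two matrices with the same Gram matrix differ by an orthogonal map on the row space, $\unembed_1 = \unembed_2\mathbf{Q}$ restricted appropriately, and the map $\vz\mapsto \mathbf{Q}\vz$ (extended to an orthogonal map of $\R^d$, possibly after padding dimensions) gives a bijection between realisable sign vectors. Since the lemma is already established for general $\unembed$ with nonzero rows, no new argument is needed beyond the identification above.
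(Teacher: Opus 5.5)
Your proposal is correct and is essentially the paper's proof: you identify $\rankingpattern{\unembed}$ with $\signpattern{\braid\unembed}$ (the rows of $\braid\unembed$ are exactly the differences $\unembedrow{i}-\unembedrow{j}$) and then apply \cref{lemma:cos-sigmoid} to $\braid\unembed$. Your extra remarks are sound additions that the paper leaves implicit: distinct unembeddings are needed for $\cosmat{\braid\unembed}$ to be defined, and you sketch the sigmoid lemma via orthogonal invariance rather than the paper's column-span chain $\signpattern{\unembed}=\signpattern{\mathbf{G}}=\signpattern{\mathbf{GD}}=\signpattern{\mathbf{DGD}}$.
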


\paragraph{Verification of Results}
While it is intractable to check the sign pattern and ranking pattern of large matrices due to the combinatorial explosion of outputs, we numerically checked~\cref{lemma:cos-sigmoid,lemma:cos-softmax} for unembedding matrices $\unembed$ with $n \in \set{3,5,10}$ and $d \in \set{2, \ldots \min{(6, n-1)}}$. We also provide a check for the counter-example of~\cref{lemma:counterexample}.
Our code implementing the checks is available on github.\footnote{\scriptsize{\url{https://github.com/bemigini/how-not-to-use-cosine-sim}}}

\section{Discussion and Further Questions}
\label{sec:discussion}

Our results highlight that the way we use cosine similarity cannot be independent of the model. For sigmoid classifiers, we can use the pairwise cosine similarity between all unembeddings to get which label combinations the model can predict (\cref{lemma:cos-sigmoid}). On the other hand, for a softmax classifier we need the pairwise cosine similarity between all \textit{differences} of unembeddings to determine which rankings the model can predict (\cref{lemma:cos-softmax}). We highlight that we only find which label combinations/rankings are possible, we do not know whether they will be assigned to any input, for this we need to consider the embeddings, $\vf(\vx)$.  

\paragraph{Unembeddings vs Embeddings.} Our result in \cref{theorem:equiv_models_diff_cosine} suggests that when working with softmax classifiers, we should not use cosine similarity between the unembeddings to say something about the assigned probabilities of the models. On the other hand, this result says nothing about whether cosine similarity is a meaningful measure to use between embeddings, as for example the probabilities of a softmax classifier are not invariant to vector translations of the input representations. 

\paragraph{Cosine Similarity in High Dimensional Spaces.} \citet{nielsen2025prediction} saw that when comparing token representations with each other it often resulted in nuisance hubness and therefore advocate for hubness reduction techniques. However, this work shows that the similarity measures themselves do not carry information about the probabilities, and this will probably not be fixed by doing hubness reduction. Related to this: In high dimensions for random vectors, cosine distance will concentrate. So if for trained models the cosine similarities vary a lot, this means that the unembeddings of trained models behave very differently from random matrices. What this ``different from random behaviour'' means is an interesting question which we hope will inspire future work.   

\paragraph{Using an Unembedding for Information Retrieval.} The fact that cosine similarity of unembeddings are not directly connected to probabilities, might also be part of the explanation for why the BERT [CLS] token is suboptimal for retrieval using cosine similarity, see e.g. table 1 in \citet{reimers2019sentence}.

\paragraph{Correlation of Cosine Distances.} In Section 4 of \citep{lee2025shared}, they show that some models have quite high correlations between their pairwise (cosine) distance matrices. But if cosine similarity between token unembeddings is not about probabilities, what do these high correlations tell us? Do they mean that there is a bias in the models for learning specific solutions? And if yes, where does such a bias come from?


\paragraph{Conclusion.} This paper shows examples of what cosine similarity can and cannot identify. First, for a softmax classifier high (or low) cosine similarity between unembeddings \textbf{cannot} be used to say that probabilities for the corresponding labels will be high (or low) at the same time. Second, for a sigmoid classifier all pairwise cosine similarities between unembeddings \textbf{can} identify which combinations of labels can be predicted. Third, for a softmax classifier, the pairwise cosine similarities \textbf{cannot} identify which rankings can be predicted, but the pairwise cosine similarities between all \textit{differences} of unembeddings \textbf{can}. Thus, before using cosine similarity one should consider whether it can answer the question of interest and which kind of classifier is relevant.

\section*{Acknowledgments}
B. M. G. N. was supported by the Novo Nordisk Foundation grant NNF24OC0092612.
A.G. was supported by ERC grant (``Numerical Analysis for Stable AI'', 101198795).

\bibliographystyle{plainnat}
\bibliography{sample}


\newpage
\appendix

\section{Proof that Adding Vector does not Change Probabilities}
\label{app:adding_vector_does_not_change_probs}

We recall \cref{lemma:Adding_vector_does_not_change_probs}:

\addingvectorsameprobs*

\begin{proof}
    Let $v\in \RR^d$. We will consider the calculation of probabilities for a model which has $\vg(y) + \vv$ as unembeddings for all $y\in \mathcal{Y}$ and show that this is the same as for the original model. Let $p'(y\mid \vx)$ denote the probability assigned by the model with $\vg(y) + \vv$ as unembeddings. We then have that
    \begin{align}
        p'(y\mid \vx) &= \frac{\exp(\vf(\vx)^{\top}(\vg(y)+\vv))}{\sum_{y'\in \mathcal{Y}}\exp(\vf(\vx)^{\top}(\vg(y')+\vv))} \\
        &= \frac{\exp(\vf(\vx)^{\top}\vg(y)+\vf(\vx)^{\top}\vv)}{\sum_{y'\in \mathcal{Y}}\exp(\vf(\vx)^{\top}\vg(y')+\vf(\vx)^{\top}\vv)} \\
        &= \frac{\exp(\vf(\vx)^{\top}\vg(y))\exp(\vf(\vx)^{\top}\vv)}{\sum_{y'\in \mathcal{Y}}\exp(\vf(\vx)^{\top}\vg(y'))\exp(\vf(\vx)^{\top}\vv)} \\
        &= \frac{\exp(\vf(\vx)^{\top}\vg(y))}{\sum_{y'\in \mathcal{Y}}\exp(\vf(\vx)^{\top}\vg(y'))} \\
        &= p(y\mid \vx).
    \end{align}
\end{proof}

\section{Proofs that Adding Vector can Change Cosine}
\label{app:adding_vector_change_cosine}

\subsection{Full Proof that Adding Vector can make Cosine -1}
\label{app:adding_vector_cosine_m1}
We recall \cref{lemma:adding_vector_cosine_m1}:

\cosineminusone*

\begin{proof}
    Let $\va, \vb \in \RR^d$ be two vectors such that $\va \neq \vb$. We first note that for any two vectors, $\vu, \vw \in \RR^d$, if $\vu + \vw = 0$, then $\cos(\vu, \vw) = -1$. This comes from the following:
    \begin{align}
        \label{eq:sum_zero_implies_cos_minus_1}
        \vu + \vw = 0 &\implies \vu = -\vw \\
        &\implies \cos(\vu, \vw) = \cos(-\vw, \vw) = -1 \;. \nonumber
    \end{align}
    We therefore choose $\vv = -(\va + \frac{1}{2}(\vb-\va))$ and see that
    \begin{align}
        (\va +\vv) + (\vb + \vv) &= \va - \va - \frac{1}{2}(\vb -\va) + \vb - \va - \frac{1}{2}(\vb-\va) \\
        &= \vb - \va - (\vb -\va) = 0 \;.
    \end{align}
    This means that $\cos(\va + \vv, \vb + \vv) = -1$ by \cref{eq:sum_zero_implies_cos_minus_1}.
\end{proof}

\subsection{Full Proof that Adding Vector can make Cosine 1}
\label{app:adding_vector_cosine_1}
We recall \cref{lemma:adding_vector_cosine_1}:

\cosineone*

\begin{proof}
    Let $\va, \vb \in \RR^d$ be two vectors. If $\va = \vb$, $\cos(\va, \vb) = 1$ and we are done. Therefore assume $\va \neq \vb$. In particular we have that $\vb - \va$ is not the zero vector. We now note that for any two vectors, $\vu, \vw \in \RR^d$, if $\vu - c\vw = 0$ for $c\in \RR_{>0}$, then $\cos(\vu, \vw) = 1$. This comes from the following:
    \begin{align}
        \label{eq:diff_zero_cos_1}
        \vu - c\vw = 0 &\implies \vu = c\vw \\
        &\implies \cos(\vu, \vw) = \cos(c\vw, \vw) \nonumber \\
        &= \frac{c \vw \cdot \vw}{\Vert c\vw\Vert \Vert \vw\Vert} = \frac{c \Vert \vw\Vert ^2}{c\Vert \vw\Vert \Vert \vw\Vert} = 1 \, . \nonumber
    \end{align}
    We therefore choose $\vv = -\va + \frac{1}{2}(\vb-\va))$ and see that
    \begin{align}
        \va + \vv = \va - \va + \frac{1}{2}(\vb - \va) = \frac{1}{2}(\vb-\va)
    \end{align}
    and 
    \begin{align}
        \vb + \vv = \vb - \va + \frac{1}{2}(\vb - \va) = \frac{3}{2}(\vb-\va) \,.
    \end{align}
    So we have that $(\vb + \vv) = 3(\va + \vv)$, which means that $\cos(\va + \vv, \vb + \vv) = 1$ by the observation in \cref{eq:diff_zero_cos_1}.  
\end{proof}

\section{Examples with Full Details}
\label{app:examples}

Code for the examples and plots can be found on github \\ \url{https://github.com/bemigini/how-not-to-use-cosine-sim}. 

\subsection{Example using Unrestricted Unembeddings}
\label{app:example_unrestricted}
We assume three models as in \cref{fig:three_models_same_probabilities_diff_cosines}, where all three models have embeddings as in \cref{fig:three_models_same_probabilities_diff_cosines} \textbf{a)}, and \textbf{b)}, \textbf{c)}, \textbf{d)} are the three different sets of unembeddings. The unembeddings of the first model corresponding to the five labels are in order 0 to 4: 
    \begin{align}
        \vl_0 &= \begin{pmatrix}
        1 \\
        0.5
    \end{pmatrix} \; , \; 
    \vl_1 = \begin{pmatrix}
        0.5 \\
        1
    \end{pmatrix}\; , 
    \vl_2 = \begin{pmatrix}
        -1 \\
        0.4
    \end{pmatrix}\; , \;
    \vl_3 = \begin{pmatrix}
        -0.8 \\
        -0.8
    \end{pmatrix}\; , \;
    \vl_4 = \begin{pmatrix}
        0.9 \\
        -1.2
    \end{pmatrix} \;. \nonumber
    \end{align} 
    We see that the cosine similarity between unembeddings corresponding to labels $0$ and $1$ is:
\begin{align}
    \cos(\vl_0, \vl_1) = \frac{1}{1.25} = 0.8 \;.
\end{align}
The unembeddings for the second model (\cref{fig:three_models_same_probabilities_diff_cosines} \textbf{c)}) are constructed by taking $\vv$ as in \cref{lemma:adding_vector_cosine_m1}, so we take 
\begin{align}
    \vv &= -\vl_0 - \frac{1}{2}(\vl_1 -\vl_0) \\
    &= -\begin{pmatrix}
        1 \\
        0.5
    \end{pmatrix} - \frac{1}{2} \left(\begin{pmatrix}
        0.5 \\
        1
    \end{pmatrix} - \begin{pmatrix}
        1 \\
        0.5
    \end{pmatrix} \right) = \begin{pmatrix}
        -0.75 \\
        -0.75
    \end{pmatrix} \;. \nonumber
\end{align}
The unembeddings of the second model are then $\vl_i + \vv$ and the cosine similarity between $\vl_0 + \vv$ and $\vl_1 + \vv$ is:
\begin{align}
    \cos(\vl_0 + \vv, \vl_1 + \vv) = -1 \; .
\end{align}
The unembeddings for the third model (\cref{fig:three_models_same_probabilities_diff_cosines} \textbf{d)}) are constructed by taking $\vv$ as in \cref{lemma:adding_vector_cosine_1}. That is, 
\begin{align}
    \vv = -\vl_0 + \frac{1}{2}(\vl_1 -\vl_0) \; ,
\end{align}
the unembeddings are $\vl_i + \vv$ and the cosine similarity between $\vl_0 + \vv$ and $\vl_1 + \vv$ is:
\begin{align}
    \cos(\vl_0 + \vv, \vl_1 + \vv) = 1 \; .
\end{align}

\subsection{Example using Centered Unembeddings}
\label{app:example_centered}

\begin{figure}[t]    
    \centering
    \begin{minipage}{0.49\textwidth}
        \centering
        \includegraphics[width = \textwidth]{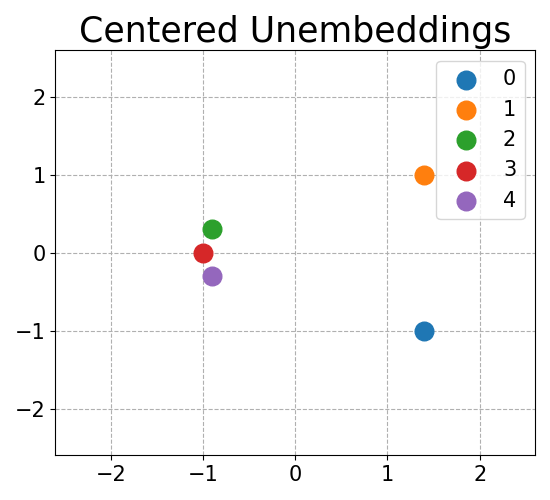}
    \end{minipage}%
    \begin{minipage}{0.49\textwidth}
        \centering
        \includegraphics[width = \textwidth]{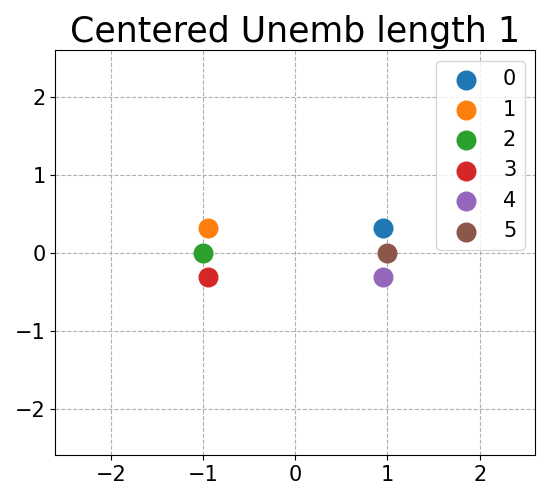}
    \end{minipage}%
    \caption{Examples of models with centered unembeddings. In both cases, we see that a high (or low) cosine similarity between unembeddings does not guarantee that the corresponding labels will have high (or low) probability for possible inputs. Left: These unembeddings are centered. The cosine similarity between $2$ and $4$ is much higher than between $2$ and $1$, but $2$ and $1$ can be likely at the same time while $2$ and $4$ cannot. Right: These unembeddings are centered and have length $1$. $1$ and $0$ can be likely at the same time while $1$ and $3$ cannot.}
    \label{fig:centered_unembeddings}
\end{figure}

We here present an example of a model with centered unembeddings illustrated in \cref{fig:centered_unembeddings} left. For the five labels the unembeddings are:
\begin{align}
    \vl_0 &= \begin{pmatrix}
        1.4 \\
        -1
    \end{pmatrix} \; , \; 
    \vl_1 = \begin{pmatrix}
        1.4 \\
        1
    \end{pmatrix}\; , 
    \vl_2 = \begin{pmatrix}
        -0.9 \\
        0.3
    \end{pmatrix}\; , \;
    \vl_3 = \begin{pmatrix}
        -1 \\
        0
    \end{pmatrix}\; , \;
    \vl_4 = \begin{pmatrix}
        -0.9 \\
        -0.3
    \end{pmatrix} \;.\nonumber
\end{align}
These unembeddings are centered since the sum is the zero vector. The cosine similarity between $\vl_0$ and $\vl_1$ is approximately $0.3$ and between $\vl_1$ and $\vl_2$ it is approximately $-0.6$, however for both these pairs of labels it is possible to place embeddings which will result in equal probability for both labels. On the other hand, the cosine similarity between $\vl_2$ and $\vl_4$ is $0.8$, but it is not possible for these labels to be tied for the highest probability, since if we place an embeddings between them, the dot-product with $\vl_3$ would be higher. 
This example also shows us that neighbourhoods using cosine similarity do also not correspond to probabilities: The two nearest neighbours of $\vl_2$ in terms of cosine similarity are $\vl_3$ and $\vl_4$, but the the two labels with which $\vl_2$ can have tied for highest probabilities are $\vl_3$ and $\vl_1$.

\subsection{Example using Centered Unembeddings of Length 1}
\label{app:example_centered_length_1}
We here present an example of a model with centered unembeddings which all have length $1$ illustrated in \cref{fig:centered_unembeddings} right. For the five labels the unembeddings are:
\begin{align}
    &\vl_0 = \begin{pmatrix}
        0.95 \\
        \sqrt{1-0.95^2}
    \end{pmatrix} \; , \; 
    \vl_1 = \begin{pmatrix}
        -0.95 \\
        \sqrt{1-0.95^2}
    \end{pmatrix}\; , \;
    \vl_2 = \begin{pmatrix}
        -1 \\
        0
    \end{pmatrix}\; , \\
    &\vl_3 = \begin{pmatrix}
        -0.95 \\
        \sqrt{1-0.95^2}
    \end{pmatrix}\; , \;
    \vl_4 = \begin{pmatrix}
        0.95 \\
        \sqrt{1-0.95^2}
    \end{pmatrix}\; , \;
    \vl_5 = \begin{pmatrix}
        1 \\
        0
    \end{pmatrix} \;.
\end{align}
We see that these sum to the zero vector and each of them has norm $1$. We see that the cosine similarity between $\vl_0$ and $\vl_1$ is approximately $-0.8$, while between $\vl_1$ and $\vl_3$ it is approximately $0.8$. At the same time, $\vl_0$ and $\vl_1$ can be tied for highest probability, while $\vl_3$ and $\vl_1$ cannot.

\section{Discussion of Other Distance Measures}
\label{app:other_distance_measures}

As discussed in the main paper, \cref{theorem:equiv_models_diff_cosine} tells us that cosine similarity between the unembeddings of a softmax classifier will not give us information about the probabilities.  \cref{theorem:equiv_models_diff_cosine} also means that the normalized Euclidean distance between unembeddings will also not tell us anything about the probabilities. The theorem also says that the dot product between unembeddings can have different signs for two equivalent models, and therefore the dot-product will also not give us information on the probabilities. To complete the list of commonly used distances: The Euclidean distance will also not give us information about the probabilities, since if we scale all unembeddings by $c > 0$ and all embeddings by $1/c$, then we do not change the probabilities, but we do change Euclidean distances.

\section{The Sigmoid Classifier Case}
\label{app:sigmoid_details}

In multi-label classification, our goal is to classify an input $\vx$ by assigning it a subset of non-mutually exclusive labels from a label vocabulary $\mathcal{V}$ of size $k$.
We treat each label $y_l,\, l \in \mathcal{V}$, as a Bernoulli random variable and we want to model label assignments of the form $\mathbf{y} = (y_1, \ldots, y_k) \in \mathcal{Y}$ where $\mathcal{Y} = \set{+1, -1}^k$ and a positive label $y_l = +1$ denotes that label $y_l$ was assigned to input $\vx$, while a negative label $y_l = -1$ denotes that $y_l$ was not.

\paragraph{Shared Feature Representation}
Perhaps the simplest model for multi-label classification is to fit an independent binary classifier per label $l \in \mathcal{V}$, parameterised by $\vg(l)$, on a shared embedding $\fembed$. We assign a probability to the presence/absence of each label as:
\begin{equation}
\label{eq:sigmoid-mlc}
p(y_l \mid \inputs) = \sigmoid{y_l\vg(l)\T \fembed},\quad \sigmoid{z} = \frac{1}{1 + e^{-z}},
\end{equation}
where we multiply the logits by $y_l$ to express both $p(y_l = +1 \mid \inputs)$ and $p(y_l = -1 \mid \inputs)$ in a single equation. This is possible because $\sigmoid{-z} = 1 - \sigmoid{z}$.

\subsection{Word2Vec as a Sigmoid Classifier}
\label{app:Word2Vec_is_sigmoid}

A lot of the language used around embeddings, such as claims that ``embeddings with high cosine similarity co-occur often in the data'' stem from foundational models of word embeddings, such as count-based co-occurrence models~\citep{deerwester1990} and early co-occurence based neural network models, such as Word2Vec~\citep{mikolov2013} and Glove~\citep{pennington2014}.
As we highlight in this paper, the co-occurrence rationale does not necessarily hold for softmax embeddings. The multi-label and multi-class setups differ in more than just task formulation: they have different implications for the geometric structure of the unembeddings. Below we connect the skip-gram model of Word2Vec with multi-label classification and sigmoid classifiers, to clarify that the co-occurrence framing is reasonable for sigmoid classifiers.

Word2Vec~\citep{mikolov2013} popularised neural network word embedding models.
The main idea is that words that appear in similar contexts should have similar representations.
This is formalised by picking a symmetric token window centred at a given token and learning a model that classifies whether two words co-occur in the token window or not.

Below we clarify that $\skipgram{\unembed}{\mathbf{Z}}$ with a label vocabulary of size $k$ and embedding dimensionality $d$, is effectively a multi-label classifier parametrised by $\unembed \in \R^{k \times d}$ together with a learnable set of input embeddings $\mathbf{Z} \in \R^{k \times d}$.
More specifically, we interpret the skip-gram model learned via negative sampling~\citep[Section 2.2]{mikolov2013} as multi-label classification,
where the positive labels come from the training data and the negative labels are sampled from a proposal distribution.
We now show that the negative sampling training objective corresponds to maximising the likelihood of incomplete label assignments. 
Our goal is to classify which tokens are likely to co-occur together in a token window and which ones are not.
We pick a token window size $w = 2r + 1,\, r \in \mathbb{N}^+$, consider the token window $C_t = (c_{t-r}, \ldots, c_t, \ldots, c_{t+r})$ centered around token $c_t$, and minimise the loss:
\begin{equation}
\label{eq:word2vecmlc}
    \mathcal{L}_t = -\sum_{-r \leq i \leq r,\, i \neq 0} \Bigl(  \log p(y_{c_{t + i}} = +1 \mid c_t)  + \sum_{j=1}^m \expectation[c_j \sim P_k(y)]{\log p(y_{c_j} = -1 \mid c_t)}\Bigr),
\end{equation}
where $c_t$ is the word index for the token at position $t$ in the text, $y_{c_i}$ is the binary label denoting whether token $c_i$ is in the window or not, and $m$ is the number of negative samples sampled from $P_k(c)$, which is a categorical distribution over the label vocabulary.\footnote{The paper uses $P_k(y) = U(k)^{\frac{3}{4}}/\mathcal{Z}$ where $U(k)$ is the unigram distribution computed over a corpus.}
By minimising $\mathcal{L}_t$ in~\cref{eq:word2vecmlc} we are maximising the likelihood of an incomplete label assignment: we have $2r$ active labels and $2rm$ inactive labels.\footnote{Strictly, $\mathcal{L}_t$ corresponds to a multi-label objective only when the tokens in $C_t$ combined with the negative samples are distinct.}.

The model is parametrised simply with embedding lookup tables, $\unembed, \mathbf{Z} \in \R^{k \times d}$ and the labels are scored via a sigmoid activation:
\begin{equation}
p(y_{c} \mid c') = \sigmoid{y_c\; \vg(c) \T \vf(c')}
\end{equation}
where $\vg(c)\T = \unembedrow{c}$ and $\vf(c')\T = \vz_{c'}$ are both embedding lookups, \ie rows, of $\unembed$ and $\mathbf{Z}$.


\paragraph{Decision Regions}
The decision regions given by $\unembed$ correspond to full-label assignments.
For $\mathcal{L}_t$ to be low, we would expect the decision regions for most contexts seen during training to exist.
As such, it makes sense that the decision regions of Word2Vec capture co-occurrence statistics and that this is reflected in the unembeddings.

\subsection{Decision Regions of a Sigmoid Multi-label Classifier}
\label{app:sigmoid_decision_regions_details}
We now analyse the decision regions for a multi-label classifier with unembedding matrix $\unembed \in \R^{k \times d}$. We first consider a single binary classifier.
The set of points in embedding space $\fembed \in \R^d$ that are assigned label $y$ are the points such that:
\begin{align}
\label{eq:mlc-decision-boundaries}
 p(y \mid \inputs) &> p(-y \mid \inputs) & \iff \\
 p(y \mid \inputs) &> 1 - p(y \mid \inputs) & \iff  \\
 p(y \mid \inputs) &> \frac{1}{2}  & \iff && \text{From \cref{eq:sigmoid-mlc}} \\
 \sigmoid{y \funembed \T \fembed} &> \frac{1}{2} & \iff && \text{(sigmoid is a bijection and $\sigmoid{0} = \frac{1}{2}$)} \\
 y \funembed \T \fembed &> 0 
\end{align}
We note that $\sigmoid{0} = \frac{1}{2}$.
The decision region for a label combination $(y_1, y_2, \ldots , y_k) \in \set{+1, -1}^n$ is given by $\vz \in \R^d$:
\begin{equation}
y_i\unembedrow{i}\T\vz > 0,\quad \forall i: 1 \leq i \leq k.
\end{equation}
$p(y_i = j ), j\in[-1, 1]$
We note that when $d < k$, there must exist label combinations that cannot be predicted. The label combinations that can be predicted by $\unembed$ are captured by the sign pattern of $\unembed$, which we define next.
 
\begin{definition}[Sign Pattern]
We determine the set of feasible label combinations as the sign pattern of $\unembed \in \R^{k \times d}$
is:
\begin{equation}
\label{app:eq_sign_pattern_unembeds}
\signpattern{\unembed}= \set{\sign{\unembed \vz} : \vz \in \R^d,\quad \unembedrow{i}\T\vz \neq 0 \quad 1 \leq i \leq k},
\end{equation}

where we disallow zeroes because we want to focus on the decision regions and not parts of the decision boundaries\footnote{Allowing logits to be zero would complicate our set notation: our set would also count decision boundaries, when we are only interested in the set of outputs that can be predicted. Moreover, logits that are exactly zero are unlikely to occur, so ignoring the zero case is not an omission that matters in practice.}, and where the $\sign{}$ function is applied element-wise:
\begin{equation}
\sign{z_i} = \begin{cases}
+ & \text{if } z_i > 0, \\
- & \text{if } z_i < 0 \\
0 & \text{otherwise}.
\end{cases}
\end{equation}
\end{definition}
We note that our sign pattern definition above extends naturally to matrices that have more columns than rank. If $\unembed' \in \R^{k \times d'}, \, d' > \rank{\unembed'} = d$, we can truncate the singular value decomposition of $\unembed' = \mathbf{U} \mathbf{\Sigma} \mathbf{V}\T$ to the top $d$ principal directions without losing any information, and use the resulting matrix instead.

\begin{proposition}
\textbf{$\signpattern{\unembed}$ is the set of orthants of $\R^k$ intersected by the span of the columns of $\unembed$}.
\label{prop:sign-colspace}
\end{proposition}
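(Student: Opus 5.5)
The plan is to prove the two inclusions directly. The only tools needed are the definition of $\signpattern{\unembed}$ and the fact that $\colspan{\unembed} = \set{\unembed \vz : \vz \in \R^d}$.

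First I fix what ``the orthants of $\R^k$ intersected by the column span'' means. For a sign vector $\mathbf{s} \in \set{+1,-1}^k$, the open orthant is
\begin{equation*}
O_{\mathbf{s}} = \set{\mathbf{u} \in \R^k : s_i u_i > 0,\ 1 \leq i \leq k}.
\end{equation*}
Open orthants are the natural choice because the definition of $\signpattern{\unembed}$ excludes zero coordinates, matching the footnote's convention of ignoring decision boundaries. The claim to prove is then
\begin{equation*}
\signpattern{\unembed} = \set{\mathbf{s} \in \set{+1,-1}^k : O_{\mathbf{s}} \cap \colspan{\unembed} \neq \emptyset}.
\end{equation*}

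For ``$\subseteq$'', take $\mathbf{s} \in \signpattern{\unembed}$. By definition there is a $\vz$ with $\unembedrow{i}\T\vz \neq 0$ for all $i$ and $\sign{\unembed\vz} = \mathbf{s}$. Then $\mathbf{u} = \unembed\vz$ lies in $\colspan{\unembed}$. Each coordinate $u_i = \unembedrow{i}\T\vz$ is nonzero with sign $s_i$, so $\mathbf{u} \in O_{\mathbf{s}}$.

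For ``$\supseteq$'', suppose some $\mathbf{u} \in O_{\mathbf{s}} \cap \colspan{\unembed}$. Since $\mathbf{u}$ is in the column span, write $\mathbf{u} = \unembed\vz$ for some $\vz \in \R^d$. Every coordinate $\unembedrow{i}\T\vz = u_i$ is nonzero because $\mathbf{u}$ lies in an open orthant. Hence $\vz$ is admissible in the definition of $\signpattern{\unembed}$, and $\sign{\unembed\vz} = \mathbf{s}$.

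I do not expect a genuine obstacle; the proof is a matter of unpacking definitions. The one point needing care is to state explicitly that orthants are taken open, since using closed orthants would include boundary points with zero coordinates and break the equivalence. I would also add a remark tying this to the paper's earlier note: replacing $\unembed$ by a truncated SVD factor leaves the column span unchanged, so by the proposition it also leaves $\signpattern{\unembed}$ unchanged. This remark is what the later proof of \cref{lemma:cos-sigmoid} will use, since the cosine matrix (after rescaling rows) determines the column span up to the relevant invariances.
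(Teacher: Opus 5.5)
Your proof is correct and follows the same idea as the paper, which simply observes that $\unembed\vz$ ranges over linear combinations of the columns of $\unembed$ and that the sign function records which orthant $\unembed\vz$ lies in. You write out both inclusions and state explicitly that the orthants must be open; the paper leaves this implicit, but it is needed to match the exclusion of zero coordinates in $\signpattern{\unembed}$.
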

The above can be verified by noticing that $\mathbf{Wx}$ is a linear combination ($\mathbf{x}$ are the coefficients) of the columns of $\unembed$ and the sign function checks which orthant $\unembed\mathbf{x}$ falls in.

We can also think of these sign vectors in $\R^d$; they are the decision regions created by partitioning feature space with the decision boundaries of the binary classifiers $\set{\unembedrow{i}}_{i=1}^k$.
 
The sign pattern is useful as it tells us the set of possible label combinations that can be predicted via argmax prediction for a multi-label classifier parameterised by $\unembed$. We will now show that the cosine similarity matrix $\mathbf{C}$ determines $\signpattern{\unembed}$. So while we cannot recover the exact probability distributions we would obtain for an input $\inputs$, \textbf{we can recover the decision boundaries of the classifier} by knowing $\mathbf{C}$.
 
\begin{definition}[Cosine Similarity Matrix]
Let $\mathbf{C} = \cosmat{\unembed} \in \R^{k \times k}$ be the symmetric matrix with entries $C_{i, j} = \cossim{\unembedrow{i}}{\unembedrow{j}}, \, i, j \in \set{1, \ldots, k}$,
where:
\begin{equation}
    \cossim{\unembedrow{i}}{\unembedrow{j}}  = \frac{\unembedrow{i} \T \unembedrow{j}}{\snorm{\unembedrow{i}} \snorm{\unembedrow{j}}}.
\end{equation}
\end{definition}

\section{Pairwise Cosine Similarities Determine the Sign Pattern for a Sigmoid Classifier}
\label{app:lemma_cos-sigmoid}

We recall \cref{lemma:cos-sigmoid}

\cossigmoid*


\begin{proof}
Let us write $\cosmat{\unembed}$ as a product of matrices: 
\begin{equation}
\cosmat{\unembed} = \mathbf{DWW}\T\mathbf{D} = \mathbf{DGD},
\label{eq:cosmat}
\end{equation}
where
\begin{equation}
\mathbf{D} =
\begin{bmatrix}
1/\snorm{\unembedrow{1}} & 0 & \cdots & \\
0 & 1/\snorm{\unembedrow{2}} & & \\
 \vdots & & \ddots & \\
& & & 1/\snorm{\unembedrow{k}}
\end{bmatrix},
\end{equation}
and $\mathbf{G} \in \R^{k \times k}$ is the Gram matrix of $\mathbf{W}$, i.e.\ $G_{i, j} = \unembedrow{i}\T\unembedrow{j}$.
Our proof now proceeds in three steps by showing incrementally that the product of matrices does not change the sign pattern:
\begin{equation}
    \signpattern{\unembed} \overset{(1)}{=} \signpattern{\unembed \unembed\T} \overset{(2)}{=} \signpattern{\unembed \unembed\T \mathbf{D}} \overset{(3)}{=} \signpattern{\mathbf{D} \unembed \unembed\T \mathbf{D}},
\end{equation}
where the first equality uses the fact that $\unembed$ and $\unembed \unembed \T$ have the same column span, the second uses the fact that scaling the columns preserves the column span, and the third uses the fact that scaling the rows by positive scalars does not change the decision regions when we interpret the rows as normal vectors that define the decision boundary of binary classifiers.
\paragraph{1. $\signpattern{\unembed} = \signpattern{\mathbf{G}}$.}
We will use~\cref{prop:sign-colspace}, so it suffices to show that span of the columns for the two matrices is equal, \ie $\colspan{\unembed} = \colspan{\mathbf{G}}$.
This can be shown in two steps. \textbf{Step 1}: each column of $\mathbf{G}$ is a linear combination of the $d$ columns of $\mathbf{W}$, with coefficients given by each column of $\mathbf{W}\T$. More concretely, the $i^{th}$ column of $\mathbf{G}$ is:
\begin{equation}
\mathbf{G}_{(:, i)} = \unembed\unembedrow{i} = \left(\unembedrow{1}\T\unembedrow{i}, \cdots, \unembedrow{k}\T\unembedrow{i}\right)\T = \sum_{j=1}^d w_{ij} \underbrace{\left(w_{1j}, \cdots, w_{kj} \right)}_{\text{$j^{th}$ column of $\unembed$}}\T.
\end{equation}

Therefore, $\colspan{\unembed} \supseteq \colspan{\mathbf{G}}$. \textbf{Step 2:} To show that $\colspan{\unembed} = \colspan{\mathbf{G}}$, it remains to show that $\dim\colspan{\unembed} = \dim \colspan{\mathbf{G}}$.
We first need to show that $\ker{\unembed\T} = \ker{\unembed\unembed\T}$ so that our result will follow via rank nullity in~\cref{eq:rank-minus-ker}.
Forward direction: if $\vx \in  \ker \unembed\T $, then $\unembed\T \vx = \zeroes \implies \unembed\unembed\T\vx = \zeroes$, so $\vx \in \ker{\unembed\unembed\T} $.
Reverse direction: suppose $\vx \in \ker \unembed\unembed\T$, so $\unembed\unembed\T\vx = \zeroes$. Left multiplying by $\vx\T$, we have $ \vx\T \unembed\unembed\T\vx = 0 \implies (\unembed\T\vx)\T(\unembed\T\vx) = 0 \implies ||\unembed\T\vx||^2 = 0 \implies \unembed\T\vx = \zeroes$, so $\vx \in \ker \unembed\T$.

Via rank nullity and since $\unembed \in \R^{k \times d}$ has $k$ rows, we have:
\begin{align}
\label{eq:rank-nullity}
\rank{\unembed\T} + \dim{\ker \unembed\T} &= k \\ 
\label{eq:rank-nullity-2}
\rank{\unembed\unembed\T} + \dim \ker \unembed\unembed\T &= k.
\end{align}

By subtracting~\cref{eq:rank-nullity-2} from \cref{eq:rank-nullity} and rearranging terms, we get:
\begin{align}
\label{eq:rank-minus-ker}
\rank{\unembed\T} &= \rank{\unembed \unembed\T} + \dim \ker \unembed \unembed \T - \dim \ker \unembed \T  &\implies && \text{($\ker \unembed\unembed\T = \ker \unembed \T$)}\\
\rank{\unembed\T} &= \rank{\unembed \unembed\T} &\implies  && \text{($\rank {\unembed\T} = \rank{\unembed}$)} \\
\rank{\unembed} &= \rank{\unembed \unembed\T} 
\end{align}
Therefore, $\dim \colspan{\unembed} = \rank{\unembed} = \rank{\unembed \unembed\T} = \dim \colspan{\unembed \unembed \T} = \dim \colspan{\mathbf{G}}$.

From (1) and (2) we have $\colspan{\unembed} = \colspan{\mathbf{G}}$.
Because the sign pattern consists of the orthants intersected by the span of the columns~(\cref{prop:sign-colspace}) and $\colspan{\unembed} = \colspan{\mathbf{G}}$, we have $\signpattern{\unembed} = \signpattern{\mathbf{G}}$.
 
\paragraph{2. $\signpattern{\unembed} = \signpattern{\mathbf{GD}}$.} Scaling each column of $\mathbf{G}$ by a scalar does not change $\colspan{\mathbf{G}}$.
 
\paragraph{3. $\signpattern{\unembed} = \signpattern{\mathbf{DGD}}$.} Multiplying by $\mathbf{D}$ on the left does not change \texorpdfstring{$\signpattern{\mathbf{GD}}$}{S(W)}, despite the fact that \emph{it does} change the span of the columns.
This is because the decision boundary of a binary classifier does not change if we scale the normal vector which corresponds to the classifier by a positive scalar.
Sign vectors arise as decision boundaries of multiple such binary classifiers stacked in the rows of $\mathbf{GD}$\footnote{Another way of stating this result is: scaling the normal vectors of a hyperplane arrangement by positive coefficients does not change the arrangement, so the decision regions are preserved.}.

Via~\cref{eq:cosmat} we therefore have $\signpattern{\cosmat{\unembed}} = \signpattern{\mathbf{DGD}} = \signpattern{\unembed}$.
\end{proof}
 
\section{The Softmax Classifier Case}
\label{app:softmax}
In multi-class classification, our goal is to classify an input $\inputs$ from a set of mutually exclusive labels. We let $\vx \in \mathcal{X}$ be inputs and $\mathcal{Y} = \set{y_1, \ldots, y_k}$ be a set of $k$ labels. We model the probability for each class as:
\label{app:lemma_cos_softmax}
\begin{equation}
\label{eq:yi-softmax}
p(y_i \mid \inputs) = \softmax{\unembed \mathbf{x}}_i = \frac{\exp(\vf(\vx)^{\top}\vg(y_i))}{\sum_{j=1}^k\exp(\vf(\vx)^{\top}\vg(y_j))} \; . 
\end{equation}
\subsection{Decision Regions of a Softmax Multi-class Classifier}
\label{app:softmax_decision_regions_details}

For the softmax case, the decision region for label $y_t$ is given by embedding vectors $\fembed \in \R^d$ for which the probability of $y_t$ is larger than for $y_i$ and satisfy:
\begin{align}
\label{eq:mcc-decision-boundaries}
 p(y_t \mid \inputs) &> p(y_i \mid \inputs) \quad \forall i: 1 \leq i \leq k,\quad i \neq t & \iff && \text{(use~\cref{eq:yi-softmax})} \\
 \frac{\exp(\vf(\vx)^{\top}\vg(y_t))}{\sum_{j=1}^k\exp(\vf(\vx)^{\top}\vg(y_j))} &> \frac{\exp(\vf(\vx)^{\top}\vg(y_i))}{\sum_{j=1}^k\exp(\vf(\vx)^{\top}\vg(y_j))} & \iff  &&\text{(positive \& equal denominators)} \\
 \exp(\vf(\vx)^{\top}\vg(y_t)) &> \exp(\vf(\vx)^{\top}\vg(y_i)) & \iff &&\text{(log is strictly monotonic)} \\
 \vf(\vx)^{\top}\vg(y_t) &> \vf(\vx)^{\top}\vg(y_i) & \iff  \\
 \vf(\vx)^{\top} \bigl(\vg(y_t) -\vg(y_i)\bigr) & > 0 & \iff  \\
(\unembedrow{t} - \unembedrow{i})\T \vf(\vx) &> 0 \quad \forall i: 1 \leq i \leq k,\quad i \neq t.
\end{align}
We note that in contrast to the decision boundaries for the multi-label case from~\cref{eq:mlc-decision-boundaries}, we now take pairwise differences of the rows of $\unembed$.
We will now specify not only the $k-1$ differences between the target class and all other labels, but all pairwise differences, as this will allow us to reuse~\cref{lemma:cos-sigmoid}. Let $\braid \in \R^{\binom{k}{2} \times k}$ be the matrix whose rows are $\unitrow{i} - \unitrow{j},\, 1 \leq i < j \leq k$ in lexicographic order:
\begin{equation}
\braid = \begin{pmatrix} 1 & -1 & 0 & \cdots & 0 \\
1 & 0 & -1 & \cdots & 0 \\
\vdots & \vdots & \vdots & \ddots & \vdots \\
1 & 0 & 0 & \cdots & -1 \\
0 & 1 & -1 & \cdots & 0 \\
\vdots & \vdots & \vdots & \ddots & \vdots \\
0 & 0 & \cdots & 1 & -1 \end{pmatrix}.
\end{equation}
The rows of $\braid$ are the normal vectors of the Braid Hyperplane arrangement~\citep[Lecture 1]{stanley2004}, a well-studied object in combinatorics that splits $\R^k$ into $k!$ regions, each corresponding to a ranking of $k$ elements. We note that the set of rankings that can be produced by a low-rank classifier $\unembed$, is determined by the regions that are reachable in the image of $\unembed$, which is the span of its columns. We therefore define the set of feasible rankings for a classifier $\unembed$ as its ranking pattern.
\begin{definition}[Ranking Pattern]
We define the ranking pattern of $\unembed$ as:
\begin{equation}
\rankingpattern{\unembed} = \set{\sign{\braid\unembed\mathbf{z}}: \mathbf{z} \in \R^d ,\quad (\unembedrow{i} - \unembedrow{j})\T\mathbf{z} \neq 0, \quad  1 \leq i < j \leq k},
\end{equation}
where as in the sign pattern case, we do not consider any points that fall on the decision boundary.
\end{definition}
The ranking pattern is the set of orthants in $\R^{\binom{k}{2}}$ that can be intersected by $\colspan{\unembed'} = \colspan{\braid\unembed}$. Each sign vector in this set corresponds to a ranking over the $k$ labels via all pairwise comparisons.
As such, we have that $\rankingpattern{\eye{k}} = \set{\sign{\braid\inputs} : \inputs \in \R^k}$, gives us the set of all possible rankings of $k$ elements, where $\eye{k}$ is the $k \times k$ identity matrix.

Observe that $\unembed' = \braid \unembed$ contains all pairwise differences of the rows of $\unembed$, where each row of $\unembedb$ is a binary classifier $\unembedrow{}' = \unembedrow{i} - \unembedrow{j}$ which classifies an input as positive if it ranks $y_i > y_j$.
 
\textbf{The ranking pattern determines the softmax decision boundaries}, as they are a coarsening of the regions that correspond to rankings of the labels: the softmax decision region for class $t$ is the union of all ranking regions that rank class $t$ first~\citet[Section 3.3]{grivas2022}.
 
Observe that $\rankingpattern{\unembed} = \signpattern{\unembed'}$. We use this to derive the ranking pattern as the sign pattern of $\unembed'$ using our earlier argument.

\cossoftmax*

\begin{proof}
We apply~\cref{lemma:cos-sigmoid} for $\unembedb=\braid\unembed$, which gives us that $\cosmat{\braid\unembed}$ determines $\signpattern{\braid\unembed}$. We then note that $\signpattern{\braid\unembed} = \rankingpattern{\unembed}$.

We spell out more of the details below.
Define $\mathbf{G}' = \unembedb{\unembedb}\T = \braid\unembed\unembed\T\braid\T$. Now, similar to the sigmoid case, scale the matrix appropriately via $\mathbf{D}'$ to obtain the cosine similarity. Define $\mathbf{D}' \in \R^{\binom{k}{2} \times \binom{k}{2}}$ as
\begin{equation}
\mathbf{D}' =
\begin{bmatrix}
 1/\snorm{\unembedrow{1}-\unembedrow{2}} & 0 & \cdots & \\
 0 & 1/\snorm{\unembedrow{1}-\unembedrow{3}} & & \\
 \vdots & & \ddots & \\
& & & 1/\snorm{\unembedrow{k-1}-\unembedrow{k}}
\end{bmatrix}.
\end{equation}
Now we have:
\begin{equation}
\mathbf{C}' = \mathbf{D}'\unembedb{\unembedb}\T\mathbf{D}' = \mathbf{D'G'D'}
\end{equation}
Applying exactly the same arguments as for the sigmoid case above, we know that $\mathbf{C}'$ determines $\signpattern{\braid\unembed}$, and hence the ranking pattern $\rankingpattern{\unembed}$.
\end{proof}

\section{Two Counter-Examples}

\label{app:counter-example}
\subsection{Pairwise Cosine Similarities do not Determine the Ranking Pattern}

\counterexample*

\begin{proof}
    We prove this by giving an example of two models with the same cosine similarities between all their unembeddings, but where their possible rankings are different.

    Let $\unembed$ and $\unembed'$ be the unembedding matrices of two models where the unembeddings are placed at angles $0, \pi/4, \pi/2, 3\pi/4, \pi, 5\pi/4$ and $7\pi/4$. For $\unembed$ let the lengths of the unembeddings all be $5$ (see \cref{fig:cosine_same_ranks_different} left), and for $\unembed'$ let the lengths be $5$ for all but the unembedding for label $3$, which has length $2$ (see \cref{fig:cosine_same_ranks_different} right). 

    \begin{figure}[th]    
    \centering
    \begin{minipage}{0.49\textwidth}
        \centering
        \includegraphics[width = \textwidth]{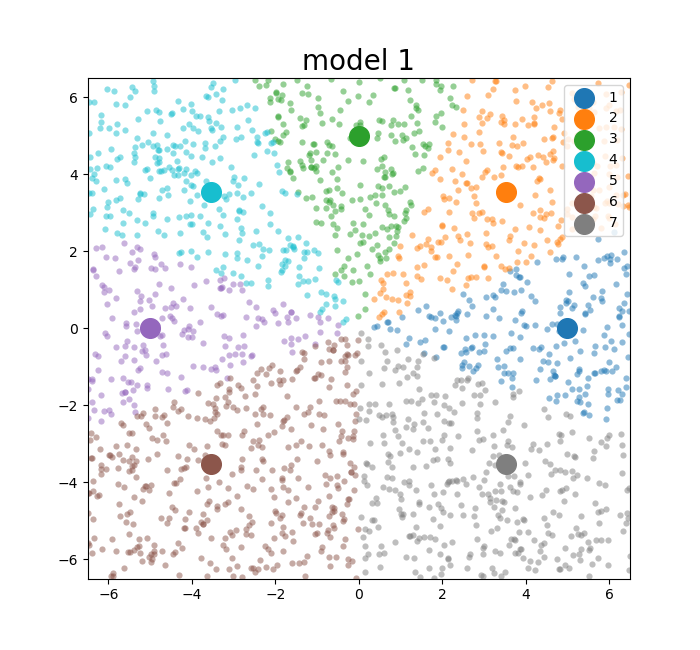}
    \end{minipage}%
    \begin{minipage}{0.49\textwidth}
        \centering
        \includegraphics[width = \textwidth]{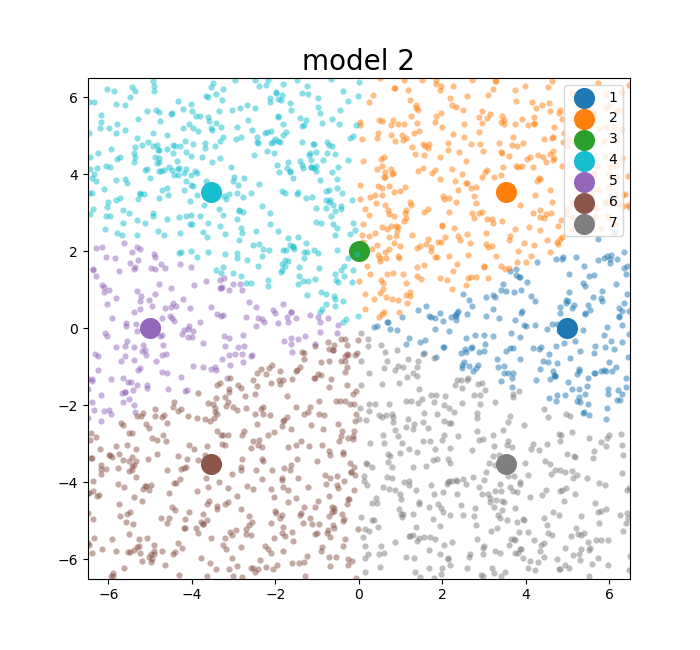}
    \end{minipage}%
    \caption{Example of two models with $7$ unembeddings placed at the same angles. Big dots are unembeddings. Small dots are embeddings coloured by their top $1$ label under the model. The model on the left has a region where label $3$ is the top $1$ label, but the model on the right has no region where label $3$ is the most likely. In other words the ranking pattern of model $2$ does not include any rankings where label $3$ is the top $1$. }
    \label{fig:cosine_same_ranks_different}
    \end{figure}
    
The ranking patterns of these two models are different, since model 1 has a region where label $3$ is the top $1$ label, but the model on the right has no region where label $3$ is the most likely. In other words the ranking pattern of model $2$ does not include any rankings where label $3$ is the top $1$, while the ranking pattern of model $1$ includes rankings of this type.

This example proves that $\cosmat{\unembed}$ does not determine $\rankingpattern{\unembed}$, and by extension via~\cref{lemma:cos-sigmoid}, $\signpattern{\unembed}$ does not determine $\rankingpattern{\unembed}$.
\end{proof}

However, given that our counter-example relies on scaling one of the vectors, which works for vector configurations, one may ask whether this still holds for affine configurations. In the next section we show by counter-example that even for the affine case $\signpattern{\unembed}$ does not determine $\rankingpattern{\unembed}$.

\subsection{The Sign Pattern does not Determine the Ranking Pattern}

\begin{restatable}{theorem}{counterexample2}
\label{lemma:counterexample2}
$\signpattern{\unembed}$ does not determine $\rankingpattern{\unembed}$.
\end{restatable}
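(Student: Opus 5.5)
The plan is to exhibit an explicit pair of unembedding matrices $\unembed, \unembed'$ with $\signpattern{\unembed} = \signpattern{\unembed'}$ but $\rankingpattern{\unembed} \neq \rankingpattern{\unembed'}$. Both patterns will be computed by hand directly from their definitions, without going through cosine similarities. Note that \cref{lemma:counterexample} combined with \cref{lemma:cos-sigmoid} already implies the bare statement. The point here is to give a witness that is \emph{not} obtained by positively rescaling rows. I will therefore also check that $\cosmat{\unembed} \neq \cosmat{\unembed'}$.

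\textbf{Smallest case, $d = 1$, $k = 3$.} Each unembedding is a scalar and $\vz \in \R \setminus \set{0}$. I would take $\unembed = (1, 2, -1)\T$ and $\unembed' = (2, 1, -1)\T$, so each row has the same sign in both matrices.
\begin{itemize}
\item \emph{Sign patterns.} For $z > 0$ we get $\sign{\unembed z} = \sign{\unembed' z} = (+,+,-)$. For $z < 0$ both give $(-,-,+)$. Hence both sign patterns equal $\set{(+,+,-),(-,-,+)}$.
\item \emph{Ranking patterns.} For $z > 0$, the ordering of $\unembed z$ is $y_2 > y_1 > y_3$, while that of $\unembed' z$ is $y_1 > y_2 > y_3$. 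The first coordinate of $\braid\unembed z$ is $(\unembedrow{1} - \unembedrow{2}) z$, which is negative for $\unembed$ and positive for $\unembed'$. For $z < 0$ both rankings reverse, so the first coordinate again has opposite signs.
\item \emph{Conclusion.} The two ranking patterns are disjoint two-element sets, so in particular they differ.
\end{itemize}
In dimension one every cosine is $\pm 1$, so here $\cosmat{\unembed} = \cosmat{\unembed'}$. This is consistent with \cref{lemma:cos-sigmoid} but does not yet address the ``beyond rescaling'' concern.

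\textbf{Genuinely non-rescaled example, $d = 2$.} I would lift the construction so that some pairwise angles differ. Take rows $\unembedrow{i}$ whose directions are permuted between the two configurations. The aim is that the lines through the origin with these normals, together with their sign labels, carve $\R^2$ into the same labelled sectors, while the difference vectors $\unembedrow{i} - \unembedrow{j}$ cut the plane differently. A convenient choice keeps three rows on a common side of the same boundary lines and moves one row far along its ray, changing which label can be top-$1$. The check has three parts:
\begin{itemize}
\item the sign pattern depends only on the cyclic order of the directions $\pm\unembedrow{i}/\snorm{\unembedrow{i}}$ around the circle;
\item the ranking pattern depends on the arrangement of the differences $\unembedrow{i} - \unembedrow{j}$;
\item exhibit a single $\vz$ realising a ranking for one matrix that no $\vz$ realises for the other, for instance label $3$ being top-$1$ exactly when $\unembedrow{3}$ is a vertex of the convex hull of the rows.
\end{itemize}

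The main obstacle is this $d = 2$ step. The difficulty is choosing vectors so that the labelled sector structure is identical while some cosine actually changes. I would first try to keep all directions fixed up to sign flips and rescalings chosen so that one unembedding leaves the convex hull, then verify the sign patterns by listing the $2k$ sectors.
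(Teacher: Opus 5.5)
\textbf{Correct, different route.} Your $d=1$, $k=3$ witness is correct. Both sign patterns are $\{(+,+,-),(-,-,+)\}$, while $\braid\unembed=(-1,2,3)\T$ and $\braid\unembed'=(1,3,2)\T$ give the disjoint ranking patterns $\{(-,+,+),(+,-,-)\}$ and $\{(+,+,+),(-,-,-)\}$. This already proves the theorem as stated, so the $d=2$ step you call the main obstacle is not needed.

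The paper argues differently. It uses two point configurations $\unembed_1,\unembed_2\in\R^{5\times 3}$ whose first column is all ones: five points in the plane, taken from the oriented-matroid literature. They admit the same separations by affine lines but have different feasible rankings. The constant column is the whole point, because it rules out positive row rescaling. Your $\unembed'$, by contrast, is just $\unembed$ with rows scaled by $2,\tfrac12,1$, which is the mechanism already behind \cref{lemma:counterexample}. So your route buys a minimal, hand-checkable witness but nothing beyond what \cref{lemma:counterexample} and \cref{lemma:cos-sigmoid} already give. The paper's route establishes the failure for affine configurations.

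Your idea also cannot be carried over to the affine setting on a line. For collinear points, with rows $(1,a_i)$, the sign pattern consists of threshold cuts and recovers the order of the $a_i$ up to reversal. That order is all the ranking pattern records, so an affine counterexample needs a planar configuration.

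As for your $d=2$ plan, $\cosmat{\unembed}\neq\cosmat{\unembed'}$ is not a good proxy for ``not a rescaling''. Shearing the rows of the rescaled matrix in the example of \cref{lemma:counterexample} leaves its column span, and that of its image under $\braid$, unchanged. Hence both patterns are unchanged, while the cosines do change. The moves you list cannot produce such an example anyway. Sliding a row along its ray preserves all cosines. In $\R^2$, flipping the sign of a row changes the sign pattern as soon as there are three pairwise non-parallel rows.
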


The sign pattern of our vector configuration in the previous example was invariant to scaling each row by a positive scalar, while the ranking pattern was not.
We can move from a vector configuration in $\R^d$ to a point configuration in $\R^{d+1}$ by including a constant column. In our example unembedding matrices in~\cref{fig:point-config} below we assume that the first column is constant, as can be seen by the column of ones.

If the sign pattern of the point configuration determined the ranking pattern, then any pair of such unembedding matrices having the same sign pattern would have the same ranking pattern.
We show this is not true by providing a counter-example from~\citep[Section 1.10, Figures 1.10.1 \& 1.10.2]{OrientedMatroids1999} of two unembedding matrices $\unembed_1, \unembed_2 \in \R^{5 \times 3}$, for which $\signpattern{\unembed_1} = \signpattern{\unembed_2}$ but $\rankingpattern{\unembed_1} \neq \rankingpattern{\unembed_2}$.

\paragraph{Sign Pattern}
We think of the sign pattern of a point configuration $\unembed \in \R^{k \times (d+1)}$ as follows. When classifying a feature vector $\vx$ via $\sign{\unembed\vx}$, because the first column is constant, we can rewrite the computation as $\unembed\vx = \unembed_{(:, 1:)}\vx_{1:} + \ones x_1$, where we are rewriting the matrix-vector multiplication as a linear combination of the columns of $\unembed$ and using the fact that the first column is a column of ones. Now, from this perspective, we can think of $\vx_{1:} \in \R^d$ as a normal vector of an affine hyperplane which is offset by $x_1$. The sign pattern is then given by thinking of each row of $\unembed_{(:, 1:)}$ as a point in $\R^{d}$ and checking what side of the affine hyperplane each point can fall on, for all possible affine hyperplanes.



\begin{figure}
    \centering
\begin{tikzpicture}[font=\scriptsize, scale=1.1]
    \def\rows{%
    {1.00, -0.80, 1.00},%
    {1.00, 0.80, 1.00},%
    {1.00, 0.55, -0.50},%
    {1.00, -1.00, -1.00},%
    {1.00, 1.00, -1.00}}

    \loadrows{\rows}
    \placecoords{5}
    
    \begin{scope}[shift={(-4.0,0)}]
        \drawmatrixV{\unembed_1 \in \mathbb{R}^{5 \times 3}}
    \end{scope}
    
    \drawpoints{5}
\end{tikzpicture}
\hfill
\begin{tikzpicture}[font=\scriptsize, scale=1.1]
    \def\rows{%
    {1.00, -1.00, 1.00},%
    {1.00, 1.00, 1.00},%
    {1.00, 0.35, -0.50},%
    {1.00, -0.80, -1.00},%
    {1.00, 0.80, -1.00}}

    \loadrows{\rows}
    \placecoords{5}
    
    \begin{scope}[shift={(-4.0,0)}]
        \drawmatrixV{\unembed_2 \in \mathbb{R}^{5 \times 3}}
    \end{scope}
    
    \drawpoints{5}
\end{tikzpicture}
\\
\vspace{.5cm}
\begin{tikzpicture}[scale=.9]
\begin{scope}[shift={(0.0,0.0)}]
    \tikzset{braidline/.style={gray, line width=.4pt}}

    \draw[braidline] (0.000,2.500) -- (-0.000,-2.500);  
    \draw[braidline] (1.858,1.672) -- (-1.858,-1.672);  
    \draw[braidline] (2.488,-0.249) -- (-2.488,0.249);  
    \draw[braidline] (1.858,1.672) -- (-1.858,-1.672);  
    \draw[braidline] (2.466,-0.411) -- (-2.466,0.411);  
    \draw[braidline] (1.858,-1.672) -- (-1.858,1.672);  
    \draw[braidline] (2.488,0.249) -- (-2.488,-0.249);  
    \draw[braidline] (0.768,-2.379) -- (-0.768,2.379);  
    \draw[braidline] (1.858,1.672) -- (-1.858,-1.672);  
    \draw[braidline] (0.000,2.500) -- (-0.000,-2.500);  

    \node[circle, fill=black, inner sep=0pt, minimum size=1.2mm] at (0,0) {};

    \contourlength{0.01pt}
    \begin{scope}[opacity=.4]
        \rv{25314}{2.469}{1.092}{r25314}{}
        \rv{21354}{1.098}{2.466}{r21354}{}
        \rv{12345}{-0.420}{2.667}{r12345}{}
        \rv{12435}{-1.468}{2.266}{r12435}{}
        \rv{14235}{-2.432}{1.172}{r14235}{}
        \rv{14325}{-3.668}{0.488}{r14325}{}
        \rv{41352}{-2.469}{-1.092}{r41352}{}
        \rv{45312}{-1.098}{-2.466}{r45312}{}
        \rv{54321}{0.420}{-2.667}{r54321}{}
        \rv{53421}{1.468}{-2.266}{r53421}{}
        \rv{53241}{2.432}{-1.172}{r53241}{}
        \rv{52341}{3.668}{-0.488}{r52341}{}
    \end{scope}
        \rv{41325}{-2.800}{0.000}{r41325}{}
        \rv{52314}{2.800}{0.000}{r52314}{}
\end{scope}
\end{tikzpicture}
\hfill
\begin{tikzpicture}[scale=.9]
\begin{scope}[shift={(0.0,0.0)}]
    \tikzset{braidline/.style={gray, line width=.4pt}}

    \draw[braidline] (0.000,2.500) -- (-0.000,-2.500);  
    \draw[braidline] (1.858,1.672) -- (-1.858,-1.672);  
    \draw[braidline] (2.488,0.249) -- (-2.488,-0.249);  
    \draw[braidline] (1.858,1.672) -- (-1.858,-1.672);  
    \draw[braidline] (2.294,-0.994) -- (-2.294,0.994);  
    \draw[braidline] (1.858,-1.672) -- (-1.858,1.672);  
    \draw[braidline] (2.488,-0.249) -- (-2.488,0.249);  
    \draw[braidline] (0.997,-2.293) -- (-0.997,2.293);  
    \draw[braidline] (1.858,1.672) -- (-1.858,-1.672);  
    \draw[braidline] (0.000,2.500) -- (-0.000,-2.500);  

    \node[circle, fill=black, inner sep=0pt, minimum size=1.2mm] at (0,0) {};

    \contourlength{0.01pt}
    \begin{scope}[opacity=.4]
    \rv{25314}{2.469}{1.092}{r25314}{}
    \rv{21354}{1.098}{2.466}{r21354}{}
    \rv{12345}{-0.550}{2.643}{r12345}{}
    \rv{12435}{-1.578}{2.191}{r12435}{}
    \rv{14235}{-2.272}{1.459}{r14235}{}
    \rv{14325}{-2.613}{0.679}{r14325}{}
    \rv{41352}{-2.469}{-1.092}{r41352}{}
    \rv{45312}{-1.098}{-2.466}{r45312}{}
    \rv{54321}{0.550}{-2.643}{r54321}{}
    \rv{53421}{1.578}{-2.191}{r53421}{}
    \rv{53241}{2.272}{-1.459}{r53241}{}
    \rv{52341}{2.613}{-0.679}{r52341}{}
    \end{scope}
    \rv{25341}{2.800}{0.000}{r25341}{}
    \rv{14352}{-2.800}{0.000}{r14352}{}
\end{scope}
\end{tikzpicture}
    \caption{Two point configurations $\unembed_1, \unembed_2 \in \R^{5 \times 3}$ that have the same sign pattern but different ranking pattern. We illustrate the two matrices and the corresponding points on the right: each row corresponds to a point of the same colour and has coordinates given by the second and third column. The sign pattern arises by taking all possible affine hyperplanes and separating out the points. Below each point configuration is the partition of the feature space in $\R^2$ into regions that correspond to which ranking would be predicted, where we fade out rankings that appear in both configurations. As can be seen, \trv{41325} and \trv{52314} only appear on the left and  \trv{14352} and \trv{25341} only appear on the right. Therefore, although the sign pattern is the same, the ranking pattern differs, proving that the sign pattern cannot determine the ranking pattern.}
    \label{fig:point-config}
\end{figure}

In~\cref{fig:point-config}, we show two unembedding matrices, $\unembed_1, \unembed_2 \in \R^{5 \times 3}$ that have the same sign pattern but different ranking patterns.
To see the sign pattern, we classify the points into positive and negative with any possible affine hyperplane in $\R^2$. For example, if take an affine hyperplane that leaves all points on the same side, we can produce \tsv{+++++}, and \tsv{-----} by flipping the hyperplane. Since the point corresponding to the first row is linearly separable from the rest, we can also produce \tsv{+----} and \tsv{-++++}. Continuing in this way, the sign pattern for both unembedding matrices is:
\begin{align*}
\signpattern{\unembed_1} = \signpattern{\unembed_2} \supset \bigl\{
&\tsv{+++++}, && \text{(All points on one side)} \\ 
&\tsv{+----}, \tsv{+-+++}, \tsv{++++-}, \tsv{+++-+},&& \text{(All trapezoid vertices, so $\textcolor{clr3}{\unembedrow{3}}$ is missing)} \\
&\tsv{++---}, \tsv{+-++-}, \tsv{+++--}, \tsv{+--+-}, && \text{(All trapezoid edges)}\\
&\tsv{++-+-}\bigr\}, && \text{(Remaining)}
\end{align*}
where for simplicity we show half the sign vectors, \ie only those that start with a \tsv{+}. The remaining negated sign vectors complete the sign pattern.

\paragraph{Ranking Pattern}
For the ranking pattern, when we compute $\braid\unembed$, we note that the constant column cancels to $0$ for the pairwise differences of the rows, and as such we can also think of the feasible rankings in $\R^d$ by ignoring the first column. We show the way feature space is partitioned into rankings at the bottom of~\cref{fig:point-config}.
As can be seen, the ranking pattern for $\unembed_1$ and $\unembed_2$ differ: the rankings \trv{41325} and \trv{52314} are feasible for $\unembed_1$ but not for $\unembed_2$, while the rankings \trv{14352} and \trv{25341} are feasible for $\unembed_2$ but not for $\unembed_1$.

\end{document}